\theoremstyle{plain}
\newtheorem{theorem}{Theorem}[section]
\newtheorem{lemma}[theorem]{Lemma}
\theoremstyle{definition}
\newtheorem{assumption}[theorem]{Assumption}
\theoremstyle{remark}
\icmltitlerunning{Multirate Training of Neural Networks}
\begin{document}

\twocolumn[
\icmltitle{Multirate Training of Neural Networks} 

\begin{icmlauthorlist}
\icmlauthor{Tiffany Vlaar}{uoe}
\icmlauthor{Benedict Leimkuhler}{uoe}
\end{icmlauthorlist}

\icmlaffiliation{uoe}{Department of Mathematics, University of Edinburgh, Edinburgh, United Kingdom}

\icmlcorrespondingauthor{Tiffany Vlaar}{tiffany.vlaar@mila.quebec}

\icmlkeywords{Neural Networks, ICML}

\vskip 0.3in
]

\printAffiliationsAndNotice{} 

\begin{abstract}
We propose multirate training of neural networks: partitioning neural network parameters into ``fast'' and ``slow'' parts which are trained on different time scales, where slow parts are updated less frequently.
By choosing appropriate partitionings 
we can obtain substantial computational speed-up for transfer learning tasks. We show for applications in vision and NLP that we can fine-tune deep neural networks in almost half the time, without reducing the generalization performance of the resulting models. We analyze the convergence properties of our multirate scheme and draw a comparison with vanilla SGD. We also discuss splitting choices for the neural network parameters which could enhance generalization performance when neural networks are trained from scratch. A multirate approach can be used to learn different features present in the data and as a form of regularization. Our paper unlocks the potential of using multirate techniques for neural network training and provides several starting points for future work in this area. 
\end{abstract}

\section{Introduction}
Multirate techniques have been widely used for efficient simulation of multiscale ordinary differential equations (ODEs) and partial differential equations (PDEs) \cite{Rice1960,Gear1974,GearWells1984, Gunther1993, Engstler1997,Constantinescu2013}. 
Motivations for using multirate techniques are the presence of fast and slow time scales in the system dynamics and to simulate systems which are computationally infeasible to evolve with a single stepsize.  

In their most general formulation the
multirate methods we consider in this work
involve separating the model parameters 
$\Theta$ into multiple components $\Theta_1,...,\Theta_N$ 
corresponding to different time scales. Slow parameters are updated less frequently than their fast counterparts but with larger stepsizes. Synchronization of the parts occurs every slow time step. This is illustrated for two time scales (and accompanying fast $\Theta_F$ and slow $\Theta_S$ parameters) in Figure \ref{fig:MTS}. 
\begin{figure}[h]
    \centering
    \includegraphics[width =\linewidth,trim={0 9cm 0 9cm},clip]{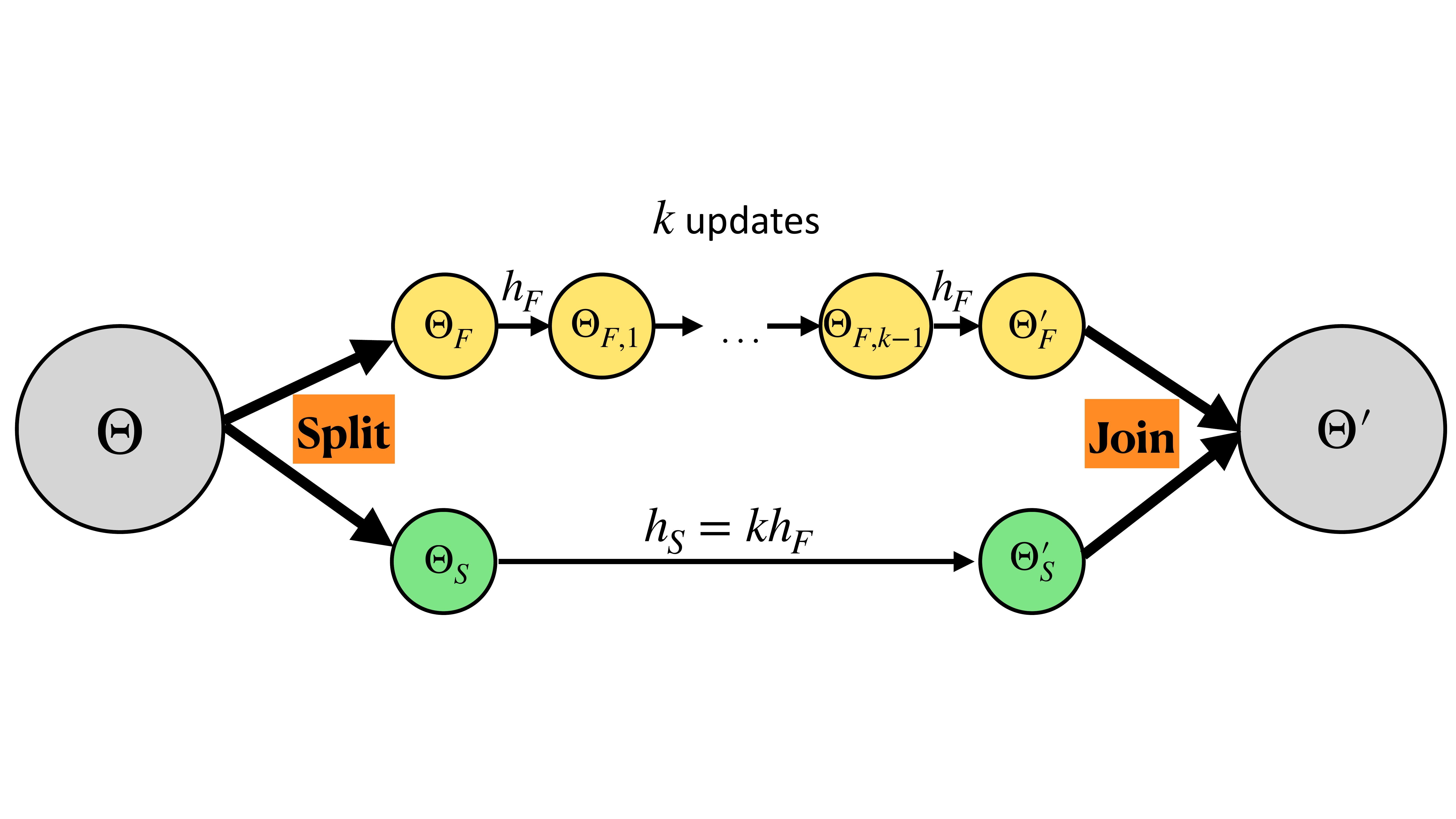}
   \vspace*{-0.4cm}
    \caption{The basic principle of the multirate techniques considered in this paper is illustrated for two time scales in this figure. We first split our model parameters $\Theta$ into fast and slow components, $\Theta_F$ and $\Theta_S$, respectively. The fast components are then updated every step with stepsize $h_F$, whereas the slow components are updated every $k$ steps with stepsize $h_S = k\cdot h_F$.}
    \label{fig:MTS} \vspace*{-0.2cm}
\end{figure}

The idea of using fast and slow weights in a machine learning context has been around for a long time \cite{feldmanfastweights,HintonPlautfastweights,Ba2016}, originally inspired by neuroscience as synapses in the brain have dynamics at different time scales.  However, the use of multirate methods has so far been largely overlooked for this area.
In this work we seek to change this. We propose a novel multirate training scheme and show its use in various neural network training settings.
We describe connections with the current machine learning literature in Section \ref{sec:lit}. 

\begin{figure*}[t]
     \hspace*{-1cm}
    \includegraphics[width=0.98\linewidth]{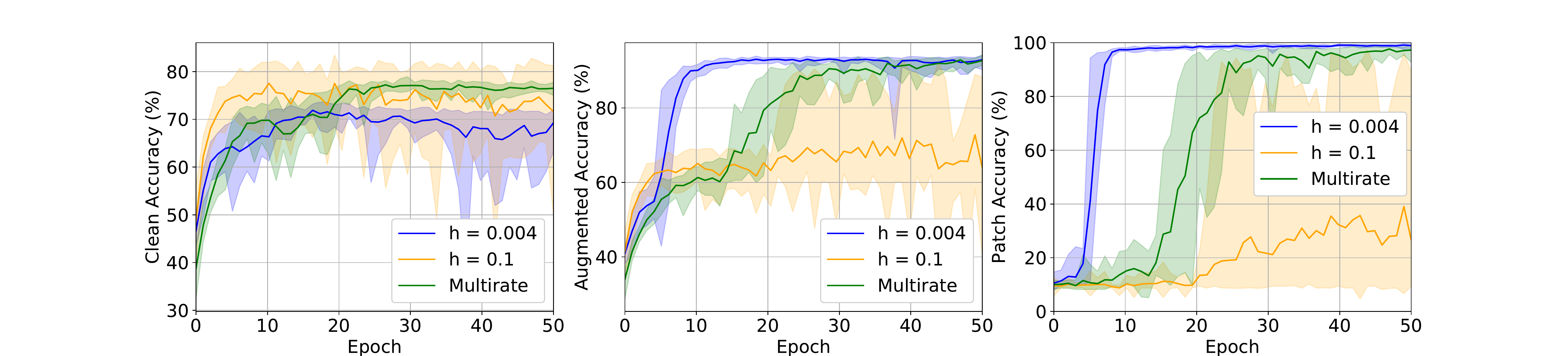} \hspace*{-1cm}
    \includegraphics[width=0.1\linewidth]{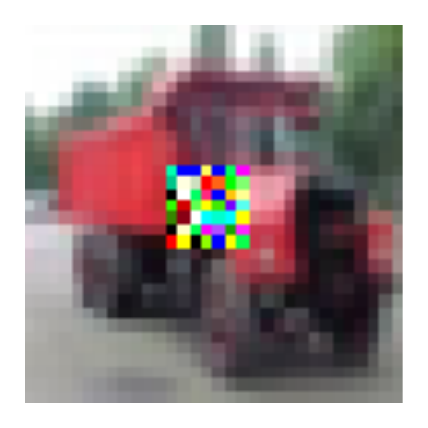}
    \vspace*{-0.3cm}
    \caption{WideResNet-16 architecture trained on 
    patch-augmented CIFAR-10 data \cite{Li2019}. An example of a CIFAR-10 image with a patch is given on the right. Of the training data: 
    20\% is patch-free, 16\% has only the patch, and the rest has both data and patch. More details are provided in Appendix \ref{appx:patch}. Left: clean validation set. Middle: augmented data with patches. Right: patch-only data. A network trained using a small learning rate (blue) learns the patch quickly, whereas a large learning rate (orange) gives higher accuracy on clean 
    data. A multirate scheme (green) trained on both time scales ($h_F = 0.004$, $h_S = 0.1$, see Section \ref{sec:multirate} and Appendix \ref{appx:patch}) is able to memorize the patches and to simultaneously obtain high accuracy on the clean data. 
    \vspace*{-0.2cm}
    }
    \label{fig:patches}
\end{figure*}

To demonstrate how multirate methods may be applicable 
in deep learning applications, consider a WideResNet-16 architecture trained on the patch-augmented CIFAR-10 dataset \cite{Li2019} using SGD with momentum and weight decay and different learning rates (Figure \ref{fig:patches}). In this dataset a noisy patch of $7\times 7$ pixels is added to the center of some CIFAR-10 images. Some images contain both the patch and CIFAR-10 data, while other images only contain the patch or are patch-free. When training using a large learning rate, the network is unable to memorize the patch, but achieves high accuracy on patch-free data. Meanwhile, when training using a small learning rate the network can memorize the patch quickly, but the accuracy on clean data is lower. We demonstrate that a multirate approach
trained on two time scales 
can both memorize the patch and obtain a high accuracy on the patch-free data. 
Multirate methods thus show potential for simultaneously gathering information on different features of the data, for settings where fixed learning rate approaches fail.

In this work we illustrate the benefit of using multirate techniques for a variety of neural network training applications. As main application we use a multirate approach to obtain computational speed-up for transfer learning tasks by evaluating the gradients associated with the computationally expensive (slow) part of the system less frequently (Section \ref{sec:transferlearning}). 
PyTorch code supporting this work, including a ready-to-use torch.optimizer, has been made available at \url{https://github.com/TiffanyVlaar/MultirateTrainingOfNNs}. 

The contributions of this paper are as follows:
\begin{itemize}
    \item We propose multirate training of neural networks, which requires partitioning neural network parameters into fast and slow parts. We illustrate the versatility of this approach by demonstrating the benefits of different partitioning choices for different training applications.
    \item (Section \ref{sec:multirate}) We describe a novel multirate scheme that uses linear drift of the slow parameters during the fast parameter update and show that the use of linear drift enhances performance. We compare its convergence properties to vanilla SGD.
    \item  (Section \ref{sec:transferlearning}) We use our multirate method to train deep neural networks for transfer learning applications in vision and NLP in almost
    half the time, without reducing the 
    generalization performance of the resulting model.
    \item (Section \ref{sec:scratch}) We show that a multirate approach can be used to provide some regularization when training neural networks from scratch. The 
    technique 
    randomly selects new subsets of the neural network to form the slow parameters using an iterative process.
\end{itemize}
We conclude that multirate methods can enhance neural network training and provide a promising direction for future theoretical and experimental work.

\section{Background} 
Multirate methods use different stepsizes for different parts of the system. Faster parts are integrated with smaller stepsizes, while slow components are integrated using larger stepsizes, which are integer multiples of the fast stepsize. Multirate methods have 
been used for more than 60 years \cite{Rice1960} 
in a wide variety of areas \cite{Engstler1997,Gunther1993}. \citet{Gear1974} analyzed the 
accuracy and stability of Euler-based multirate methods applied to a system of ODEs with slow and fast components.

The system of ODEs that forms the starting point for most neural network training schemes is $\text{d}\theta = G(\theta)\text{d}t$, where $\theta\in \mathbb{R}^n$ are the neural network parameters and $G$ represents the negative gradient of the loss of the entire dataset. As a starting point for our multirate approach we partition the parameters as $\theta = (\theta_F,\theta_S)$, with $\theta_F
\in \mathbb{R}^{n_F}, \theta_S
\in \mathbb{R}^{n_S}$, $n = n_F +n_S$,
and obtain system of ODEs:
\begin{align}
    \text{d}\theta_F = G_F(\theta)\text{d}t, \ \ \text{d}\theta_S = G_S(\theta)\text{d}t, \label{GD}
\end{align}
where $G_F$ and $G_S$ are the gradients 
with respect to $\theta_F$ and $\theta_S$, respectively. 

For neural network training the loss gradient is typically evaluated on a randomly selected subset of the training data
and the pure gradient in Eq. \eqref{GD} is subsequently replaced by a noisy gradient which we denote $\tilde{G}(\theta)$. 
Further, most training procedures incorporate momentum \cite{GDwithmom,sutskever}. 
In the stochastic gradient Langevin dynamics method of \citet{WeT11}, the system is further driven by constant variance additive noise.
As a somewhat general model, one may consider a partitioned underdamped Langevin dynamics system of stochastic differential equations of the form: 
\begin{align}
  \text{d}\theta_{\alpha} &= p_{\alpha}\ \text{d}t, \ \ \ \  \text{where} \ {\alpha} = F,S \nonumber \\ 
  \text{d}p_{\alpha} &= \tilde{G}_{\alpha}(\theta)\text{d}t-\gamma_{\alpha} p_{\alpha}\  \text{d}t+\sqrt{2\gamma_{\alpha}\tau_{\alpha}}\ \text{d}W_{\alpha}, 
    \label{Langevin}
\end{align}
with momentum $p= (p_F,p_S)\in \mathbb{R}^n$ and hyperparameters $\gamma_{\alpha},\tau_{\alpha} > 0$.
When evaluating the gradient on the full dataset, Langevin dynamics is provably ergodic \cite{MaStHi2002}, under mild assumptions, and samples from a known distribution. In this paper we will focus on the case $\tau_{\alpha} = 0$, which corresponds to standard stochastic gradient descent (SGD) with momentum under re-scaling of the hyperparameters, however, our multirate approach can easily be extended to the more general case. 
We have also opted to use the same momentum hyperparameter ($\gamma_{\alpha}$ in Eq. \eqref{Langevin}) for both subsystems to provide a fair comparison with standard SGD with momentum. Using different optimizer hyperparameters, as well as exploration of methods which combine different optimizers for different components, is left for future study  (see Section \ref{sec:discussion} and Appendix \ref{sec:variants}).  Algorithms can easily be designed based on partitioning into multiple independent components (not just two) evolving at different rates, as we illustrate in Section \ref{subsec:multirate}.

\section{Multirate Training of Neural Networks}\label{sec:multirate}
In Section \ref{subsec:multirate} we propose a novel multirate technique that can be directly applied to the training of neural networks and discuss application-specific appropriate choices for the fast and slow parameters. In Section \ref{subsec:convergence} we study the convergence properties of the scheme.

\subsection{A Partition-based Multirate Approach}\label{subsec:multirate}  
The type of multirate algorithms we consider in this work
take the following approach for two time scales:
\begin{enumerate} 
    \item Separate model parameters into a fast and slow part.
    \item At every step, compute the gradients with respect to the fast variables. Update the fast variables using the optimizer of your choice with fast stepsize $h_F$.
    \item Every $k \in \mathbb{Z}_{+}$ steps: Compute gradients with respect to the slow variables. Update slow variables using the optimizer of your choice with slow stepsize $h_S = k h_F$.
\end{enumerate}
This multirate approach can be combined with different optimization schemes, such as of the form in Eq. \eqref{Langevin}. In this work, for our analysis and numerical experiments we shall focus on using as base algorithm 
stochastic gradient descent (SGD), where the gradients are computed for every mini-batch of $m$ training examples. We will compare our multirate approach with PyTorch's standard SGD with momentum implementation \cite{Pytorch} and hence for consistency we present our method in the same notation and manner as used in the PyTorch code. Our multirate scheme is described by Algorithm \ref{multirateSGDwithmom}. We refer to the model parameters and momenta associated with the slow system as $\theta_S$ and $p_S$, respectively, and for the fast system as $\theta_F$ and $p_F$. We denote by $\mathcal{L}(\theta_S,\theta_F)$ the neural network loss as evaluated on a minibatch of training examples. We use the cross-entropy loss for classification tasks. We use $\mu$ to denote the momentum hyperparameter, which we typically set to $\mu = 0.9$. 
 
 We discuss variations of Algorithm \ref{multirateSGDwithmom} such as combining this multirate approach with other optimizers, the use of weight decay, or using different initializations for the fast and slow systems in Appendix \ref{sec:variants}. 
 
\textbf{Linear drift.} 
In Algorithm \ref{multirateSGDwithmom} we continuously push the slow parameters along a linear path defined by their corresponding momenta. This means that although the gradients for the slow parameters are only computed every $k$ steps, the slow neural network parameters do get updated every step in the direction of the previous gradient. This is a novel technique for multirate training,
where approaches similar to that in Algorithm \ref{multirateSGDwithmomwithoutlinear} are more prevalent. 
We compare these approaches in 
ablation studies in Section \ref{sec:ablation} and show that the use of linear drift enhances performance.

\begin{algorithm}[H] 
 \caption{Multirate SGD with linear drift} 
 \label{multirateSGDwithmom}
\begin{algorithmic}
\STATE $p_S := \mu p_S + \nabla_{\theta_S} \mathcal{L}(\theta_S,\theta_F) $
 \FOR{$i=1,2,...,k$}
\STATE $p_F := \mu p_F + \nabla_{\theta_F} \mathcal{L}(\theta_S,\theta_F) $ 
\STATE $\theta_F := \theta_F-\frac{h}{k}p_F$ 
\STATE $\theta_S := \theta_S-\frac{h}{k}p_S$ 
 \ENDFOR
 \end{algorithmic}
\end{algorithm}
\begin{algorithm}[H] 
 \caption{Multirate SGD no linear drift} \label{multirateSGDwithmomwithoutlinear}
\begin{algorithmic}
\STATE $p_S := \mu p_S + \nabla_{\theta_S} \mathcal{L}(\theta_S,\theta_F) $
\STATE $\theta_S := \theta_S-hp_S$ 
 \FOR{$i=1,2,...,k$}
 \STATE $p_F := \mu p_F + \nabla_{\theta_F} \mathcal{L}(\theta_S,\theta_F) $ 
\STATE $\theta_F := \theta_F-\frac{h}{k}p_F$ 
 \ENDFOR
\end{algorithmic}
\end{algorithm}

\textbf{Choice of Partitioning.}
Examples of possible separations of the model parameters into fast and slow components are layer-wise, weights vs. biases, or by selecting (random) subgroups. The appropriate separation is application-specific and will be discussed in more detail in upcoming sections.
In Section \ref{sec:transferlearning} we explore obtaining computational speed-up using Algorithm \ref{multirateSGDwithmom} through layer-wise partitioning, where our fast parameters are chosen such that the gradients corresponding to the fast system are quick to compute, while gradients of the full net are only computed every $k$ steps. In Appendix \ref{sec:slowbias} we study the effect of putting the biases of a neural network on the slow time scale. Finally, in Section \ref{sec:randomsubgroups} we use partitioning using random subgroups to develop a regularization technique for neural network training.

\textbf{Extension to more scales.} Although we have presented the algorithm for two time scales, the scheme can easily be extended to more scales. 
To extend our framework to multiple components operating at
$r$ scales, one can use stepsizes $h_{i}=h_{i-1}/K_{i-1}$, $i=1,2,\ldots,r, K_i\in \mathbb{Z}_{+}$, recursively dividing the step sequences in Algorithm \ref{multirateSGDwithmom} and \ref{multirateSGDwithmomwithoutlinear} into finer ones at each successive level of the parameter hierarchy. 

\textbf{Uncoupled learning rates.} In Algorithm \ref{multirateSGDwithmom} and \ref{multirateSGDwithmomwithoutlinear} the fast and slow learning rates are coupled. 
Alternatively, one could introduce an uncoupled learning rate for the slow parameters. This may lead to further performance enhancements, but introduces an extra hyperparameter and thus additional tuning. We provide some ablation studies in Appendix \ref{appx:ablation}. 

\begin{figure*}[h]
    \centering
    \includegraphics[width=0.9\linewidth]{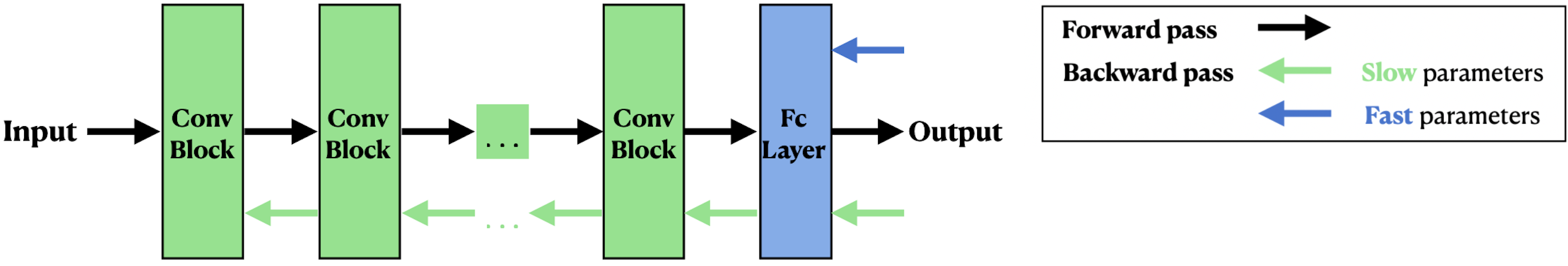} 
    \caption{We indicate in blue the fast parameters
    and in green the slow parameters 
    of a convolutional architecture, which consists of several convolutional blocks (conv block) and fully connected (fc) layer(s). When setting the fast parameters to be the final fc layer(s) (and optionally the conv block directly preceding it), the gradient computation for the backpropagation algorithm is very fast.
   }    \label{fig:resnetfastslow}
\end{figure*}

\subsection{Convergence Analysis}\label{subsec:convergence}
To study the convergence properties of multirate SGD in the non-convex setting we make the following (standard) assumptions:

\begin{assumption} We assume the function $f: \mathbb{R}^n\rightarrow \mathbb{R}$ to be $L$-smooth, i.e., $f$ is continuously differentiable and its gradient is Lipschitz continuous with Lipschitz constant $L > 0$:
\begin{align}
    \|\nabla f(\varphi)-\nabla f(\theta)\|_2\leq L\|\varphi-\theta\|_2, \ \forall \theta,\varphi\in\mathbb{R}^n. 
\end{align}\label{assm:Lsmooth}
\end{assumption} 
\begin{assumption}
\vspace*{-0.2cm}
We assume that the second moment of the stochastic gradient is bounded above, i.e., there exists a constant $M$ for any sample $x_i$ such that
\begin{align}
    \|\nabla f_{x_i}(\theta)\|^2_2 \leq M, \   \ \forall \theta \in \mathbb{R}^n.\vspace*{-0.2cm}
\end{align}\label{assm:boundedvariance}
\end{assumption}
\vspace*{-0.45cm}
Assumption \ref{assm:boundedvariance} guarantees that the variance of the stochastic gradient is bounded. Under Assumption \ref{assm:Lsmooth} and \ref{assm:boundedvariance} we show in Appendix \ref{appx:conv} that Theorem \ref{thm:conv} holds  for our layer-wise partitioned multirate SGD approach: 
\begin{theorem}
\label{thm:conv} We assume that \ref{assm:Lsmooth} and \ref{assm:boundedvariance} hold. Then
\begin{align}
     \frac{1}{T}\sum^{T-1}_{t=0} \mathbb{E}\left [ \|\nabla f(\theta^t)\|^2_2 \right ] &\leq \frac{ 2(f(\theta^0)-f(\theta^*))}{hT} \nonumber \\
     & \ \ \ \ \ + h L M\ell \left ( \frac{1}{3} h L k^2 + 1 \right ),
\end{align}
where $T$ is the number of iterations, $L$ and $M$ are as defined in Assumptions \ref{assm:Lsmooth} and \ref{assm:boundedvariance}, $\ell$ is the number of parameter groups, $k$ is the additional hyperparameter associated with our multirate method, and $\theta^*$ is the optimal solution to $f(\theta)$.
\end{theorem}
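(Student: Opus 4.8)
The plan is to treat one complete multirate cycle of $k$ inner steps as the basic unit of analysis, since over a cycle both the fast and slow parameters advance by approximately $h$ times a gradient, mirroring a single vanilla SGD step of size $h$; this is what makes the leading term $2(f(\theta^0)-f(\theta^*))/(hT)$ match the standard non-convex SGD rate. Writing $\theta^{(n)}$ for the parameters at the start of cycle $n$ and $\Delta^{(n)} = \theta^{(n+1)}-\theta^{(n)}$, I would first record the block structure of the cycle update from Algorithm~\ref{multirateSGDwithmom},
\[
\Delta_S^{(n)} = -h\,g_S, \qquad \Delta_F^{(n)} = -\tfrac{h}{k}\sum_{i=1}^{k} g_F^{(i)},
\]
where $g_S = \nabla_{\theta_S}\mathcal{L}(\theta^{(n)})$ is the single stochastic slow gradient evaluated \emph{at the cycle start} and each $g_F^{(i)}$ is the stochastic fast gradient evaluated at the intermediate iterate $\theta^{(n)}_i$ reached after $i-1$ inner steps. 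The key structural observation is that $g_S$ is unbiased for $\nabla_{\theta_S} f(\theta^{(n)})$, whereas the $g_F^{(i)}$ are unbiased only for $\nabla_{\theta_F} f$ at the drifted intermediate points, so the sole extra error relative to ordinary SGD is the drift of the fast gradients away from the cycle-start gradient.

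Next I would apply the $L$-smoothness descent lemma (Assumption~\ref{assm:Lsmooth}) across the cycle,
\[
f(\theta^{(n+1)}) \le f(\theta^{(n)}) + \langle \nabla f(\theta^{(n)}), \Delta^{(n)}\rangle + \tfrac{L}{2}\|\Delta^{(n)}\|_2^2,
\]
and take expectations conditioned on the cycle-start filtration, handling the fast block's nested randomness by iterating expectations over the inner steps. The inner-product term assembles into a clean leading piece $-h\|\nabla f(\theta^{(n)})\|_2^2$ plus a cross term of the form $\tfrac{h}{k}\sum_i \langle \nabla_{\theta_F} f(\theta^{(n)}),\, \nabla_{\theta_F} f(\theta^{(n)}) - \nabla_{\theta_F} f(\theta^{(n)}_i)\rangle$ measuring the fast-gradient drift. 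I would split this cross term with Young's inequality; the half absorbed against the leading term leaves $-\tfrac{h}{2}\|\nabla f(\theta^{(n)})\|_2^2$, which is precisely what produces the factor of $2$ in the first term of the bound after telescoping.

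It remains to bound the two residual error contributions. For the quadratic term I would expand $\|\Delta^{(n)}\|_2^2$ block-wise and invoke Assumption~\ref{assm:boundedvariance} group by group: each of the $\ell$ parameter groups contributes a stochastic gradient of squared norm at most $M$, which is where the factor $\ell$ enters and yields the additive $hLM\ell$ piece (the ``$+1$'' inside the bracket). For the drift term I would convert parameter movement into gradient discrepancy via Assumption~\ref{assm:Lsmooth}: after $i-1$ inner steps the intermediate iterate has moved by at most a constant times $(i-1)\tfrac{h}{k}\sqrt{M}$ per group, so $\|\nabla_{\theta_F} f(\theta^{(n)})-\nabla_{\theta_F} f(\theta^{(n)}_i)\|_2 \le L\|\theta^{(n)}-\theta^{(n)}_i\|_2$ grows linearly in $i$. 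Summing the resulting contributions over $i=1,\dots,k$ uses $\sum_{i=1}^{k} (i-1)^2 \approx k^3/3$, and combined with the $1/k$ prefactors this is exactly where the coefficient $\tfrac{1}{3}hLk^2$ appears. Finally I would telescope $\sum_{n=0}^{T-1}\bigl(f(\theta^{(n)})-f(\theta^{(n+1)})\bigr) = f(\theta^0)-f(\theta^{T}) \le f(\theta^0)-f(\theta^*)$, divide through by $hT/2$, and collect the error terms into $hLM\ell\bigl(\tfrac{1}{3}hLk^2+1\bigr)$.

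The hard part will be the bookkeeping of the within-cycle drift: the fast gradients $g_F^{(i)}$ depend on all earlier randomness in the cycle, so the nested conditional expectations must be unwound carefully, and the linearly growing drift bound has to be accumulated through two successive sums (over the $\ell$ groups and over the $k$ inner steps) to land the precise $\tfrac{1}{3}k^2$ coefficient rather than a loose $O(k^2)$ constant. A secondary subtlety is the momentum term $\mu$ in Algorithm~\ref{multirateSGDwithmom}: the bound carries no $\mu$-dependence, so I expect the analysis to be carried out for the plain $\mu=0$ case (or with the momentum contribution absorbed into the constants), and I would state this simplification explicitly at the outset.
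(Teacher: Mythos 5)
Your overall architecture --- descent lemma, Young's inequality to split the stale-gradient cross term, $L$-smoothness to convert parameter displacement into gradient drift, the $\sum_{m} m^2 \le k^3/3$ accumulation, and a final telescoping --- matches the paper's proof, and your drift bookkeeping would indeed reproduce the $\tfrac{1}{3}h^2L^2M\ell k^2$ term (the paper measures the drift of the \emph{stale slow} gradient relative to the current iterate $\theta^t$, whereas you measure the drift of the \emph{fast} gradients relative to the cycle start; these are dual and both reduce to bounding $\|\theta^\tau-\theta^t\|_2^2$). Your remark about momentum is also correct: the paper's analysis models the update as $\theta^{t+1}_\ell=\theta^t_\ell-h\nabla f_{\ell,x_i}(\theta^t)$ with the slow gradient frozen over each block of $k$ steps, i.e.\ effectively $\mu=0$.

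However, there is a genuine gap: you apply the descent lemma \emph{once per $k$-step cycle} to the aggregate increment $\Delta^{(n)}$, while the paper applies it \emph{at every iteration}. This matters in two places. First, the left-hand side of Theorem \ref{thm:conv} is $\frac{1}{T}\sum_{t=0}^{T-1}\mathbb{E}\|\nabla f(\theta^t)\|_2^2$, an average over \emph{all} iterates; your telescoping over cycles only produces $\frac{1}{N}\sum_{n}\mathbb{E}\|\nabla f(\theta^{(n)})\|_2^2$ over the $N=T/k$ cycle-start iterates. The paper recovers the full average because the per-iteration cross term yields $\|\nabla f_F(\theta^t)\|_2^2$ exactly at every $t$ and the slow block is corrected by adding and subtracting $\nabla f_S(\theta^t)$ at every $t$. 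Second, and more quantitatively, the quadratic term $\tfrac{L}{2}\|\Delta^{(n)}\|_2^2$ penalizes the slow block's \emph{total} cycle displacement, $\|k h\, g_S\|_2^2=k^2h^2\|g_S\|_2^2$ per cycle, whereas the per-iteration analysis pays only $k\cdot h^2\|g_S\|_2^2$ over the same $k$ steps (smoothness is re-anchored at each intermediate point). After normalizing, your route gives a variance term of order $khLM\ell$ rather than the theorem's $hLM\ell$; the ``$+1$'' inside the bracket becomes ``$+k$''. So as written your plan proves a valid but strictly weaker statement than Theorem \ref{thm:conv}. The fix is exactly the paper's device: keep the one-step update $\theta^{t+1}=\theta^t-h g_{x_i}(\theta^t)$ with $g_{x_i}(\theta^t)=\{\nabla f_{F,x_i}(\theta^t),\nabla f_{S,x_i}(\theta^\tau)\}$, apply Lemma \ref{lemma:fromLsmooth} at every $t$, and only group iterations into cycles when bounding the drift term $\mathcal{B}$.
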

From Theorem \ref{thm:conv} one sees that as $T \rightarrow \infty$, the $h L M\ell \left ( \frac{1}{3} h L k^2 + 1 \right )$ term controls the upper bound. 
The expression in Theorem \ref{thm:conv} is very similar to that obtained for vanilla SGD where the rightmost term is replaced by $hLM/2$ (see Appendix \ref{appx:conv}).
Therefore, by decreasing the stepsize $h$, SGD can get closer to the neighborhood of a critical point. For our algorithm the choice of $k$ (the additional hyperparameter introduced by our multirate method) also plays a role, where smaller values of $k$ will lower the upper bound, but also increase the computational cost (in particular for our transfer learning application described in Section \ref{sec:transferlearning}). 

\begin{figure*}[t]
\centering
    \includegraphics[width=0.83\linewidth]{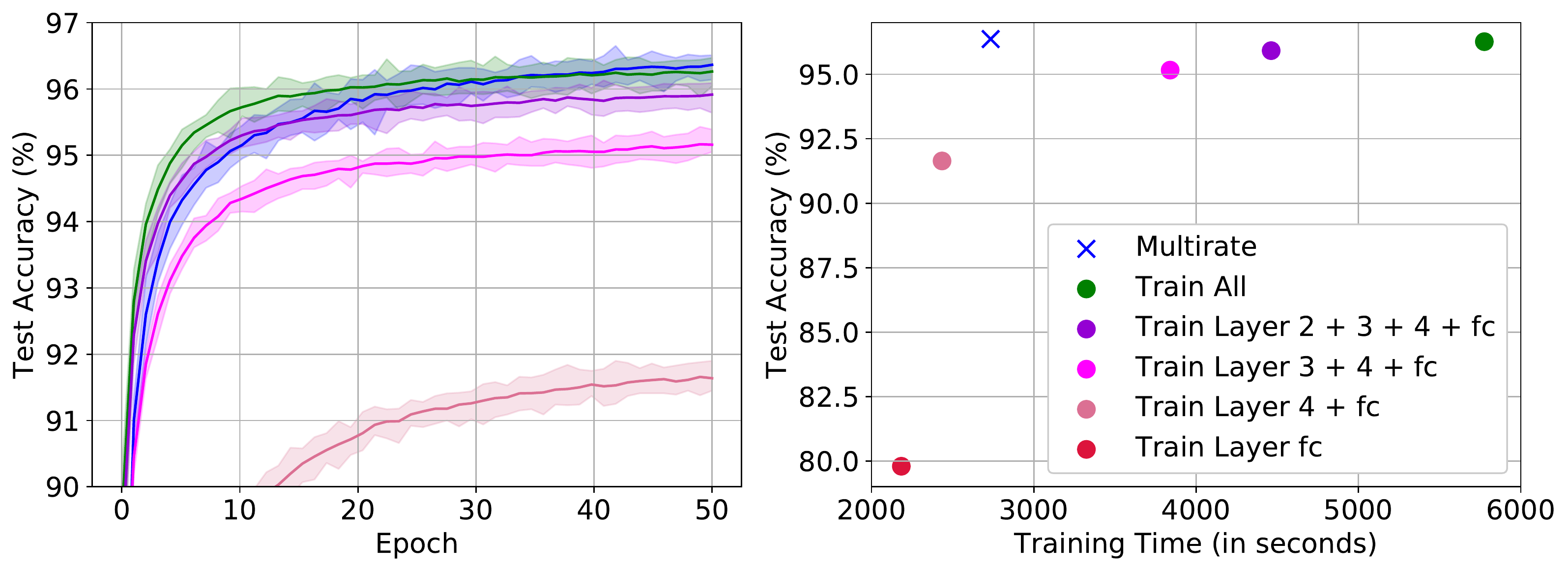}
   \vspace*{-0.35cm}
    \caption{A pre-trained ResNet-34 being trained on CIFAR-10 data using different fine-tuning approaches and our multirate approach (blue).  Results are averaged over 20 runs and all approaches are trained using SGD with momentum as base algorithm. We set $h/k = 0.001, k = 5,$ and $\mu = 0.9$ in Algorithm \ref{multirateSGDwithmom}. The highest test accuracy is reached using our multirate approach (blue), which can be used to train the net in almost half the time. Typical fine-tuning approaches only train the bottom layers of the network, e.g. just the fully connected (fc) layer (red) or layer 4 + fc, which results in a comparable speed-up, but much lower test accuracy.} 
    \label{ResNet-34_rescaled}
\end{figure*}

\begin{figure*}[t]
\centering
    \includegraphics[width=0.83\linewidth]{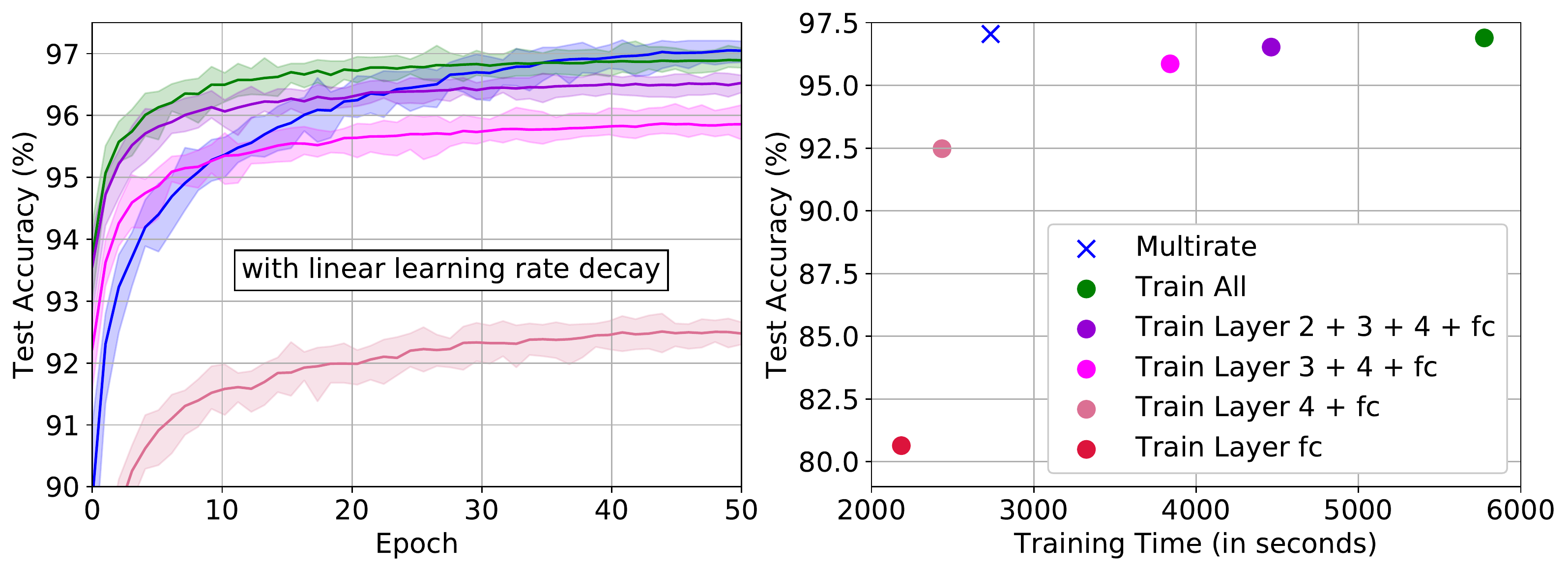}
    \caption{Fine-tuning a ResNet-34 architecture on CIFAR-10 data (same setting as in Figure \ref{ResNet-34_rescaled}), but using linear learning rate decay with initial learning rate set to 5e-3. We again observe that the multirate approach (blue) can be used to train the network in about half the time, while maintaining (or even slightly improving) the test accuracy obtained when fine-tuning the full network (green).} 
    \label{ResNet-34_rescaled_lrdecay}
\end{figure*}

\begin{figure*}
\centering
    \includegraphics[width=0.83\linewidth]{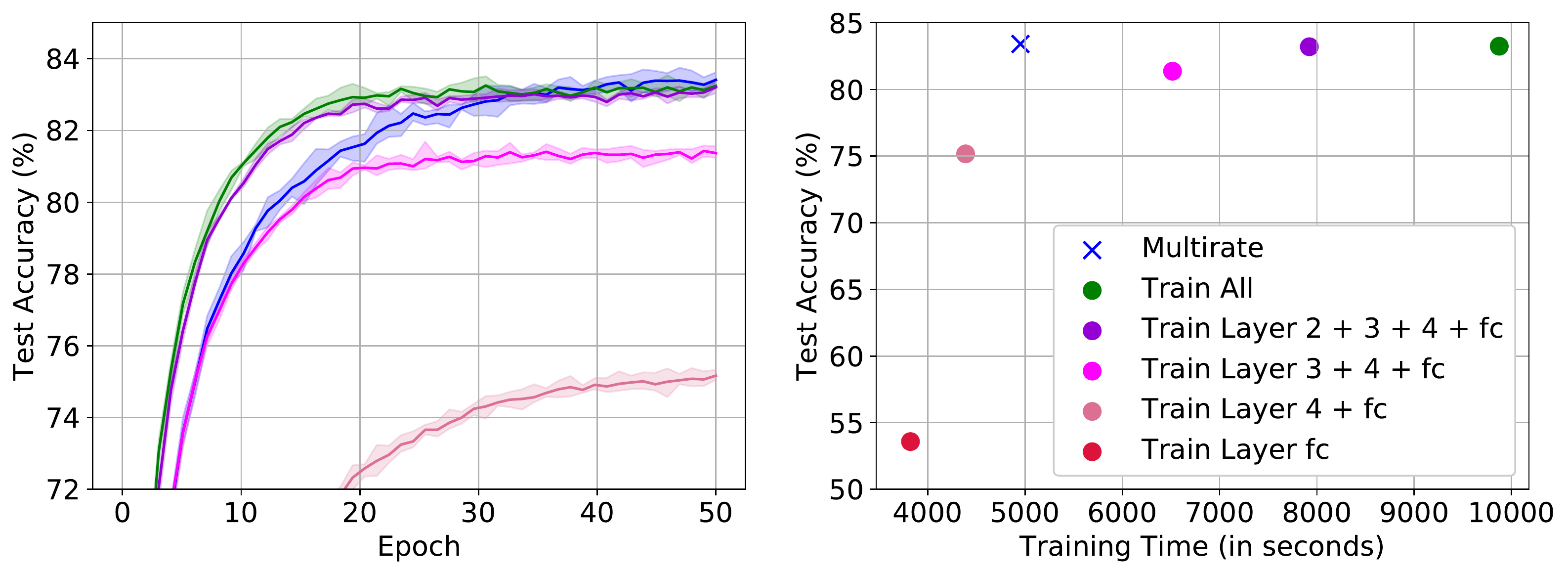}
    \caption{Same set-up as in Figure \ref{ResNet-34_rescaled}, but here we consider a pre-trained ResNet-50 architecture for CIFAR-100 data.
    The highest test accuracy is reached using our multirate approach (blue), which can be used to train the net in almost half the time.}
    \label{ResNet-50_rescaled}
\end{figure*}

\section{A Multirate Approach to Transfer Learning}\label{sec:transferlearning}
We now discuss the application of our multirate scheme in the context of transfer learning, proposing a specific layer-wise division of the model parameters into fast and slow components within Algorithm \ref{multirateSGDwithmom}. We will see that this can significantly reduce the computational cost of fine-tuning.
   
\textbf{Background.} The use of pre-trained deep neural networks has become a popular choice of initialization \cite{BERT,Yosinski2014}. These pre-trained networks are readily available through popular machine learning libraries such as PyTorch \cite{Pytorch}, and are usually trained on large datasets, such as ImageNet for vision applications \cite{Huh2016} or large text corpora for natural language processing \cite{HowardRuder2018}. 
Using a pre-trained network as initialization has been shown to significantly accelerate training and typically improves the generalization performance of the resulting model \cite{Yosinski2014,He2019,Radford2018}.
The procedure is typically as follows: start with a pre-trained model, remove task-specific layers, and then re-train (part of) the network on the new target task. Later layers of neural networks tend to capture more task-specific knowledge, while early layers encode more general features, which can be shared across tasks \cite{Yosinski2014,Hao2019,Neyshabur2020,Raghu2019}. Hence to speed up training (and, in low target-data scenarios, to prevent overfitting), one sometimes does not re-train the full neural network, but only the later layers, in particular the final fully connected layer. This process is called fine-tuning \cite{HowardRuder2018,DaiLe2015}. There exists a delicate balance between computational cost and generalization performance of fine-tuned deep neural network architectures. ``Fine-tuning the whole network usually results in better performance'' \cite{Li2020}, but also increases the computational cost.

\textbf{Methodology.} We propose to split a neural network into two parts, the fully connected layer parameters -- the fast part -- and the other parameters of the deep neural network -- the slow part. The fast part is updated with a stepsize $h/k$, while the other part (the slow part) is only updated every $k$ steps with a stepsize $h$. The slow part is very large compared to the fast part. For example, for a ResNet-34 architecture \cite{resnet}, the fully connected layer parameters (the fast part) only constitute 0.024\% of the total parameters. Because of the way the backpropagation algorithm works, for our fast parameter updates we do not need to compute gradients for the full network, because the fast part is the very last layer of the neural network. This is illustrated schematically in Figure \ref{fig:resnetfastslow}. Assuming that computing the gradients constitutes the largest cost of neural network training, we obtain significant speed-up by only needing to compute the full network gradients every $k$ steps. We show that by choosing an appropriate $k$ we can maintain a good generalization performance for nearly half the computational cost. 

\subsection{Numerical Results}
We study the computational speed-up and generalization performance of Algorithm \ref{multirateSGDwithmom} compared to standard fine-tuning approaches. We consider a ResNet-34 architecture \cite{resnet}, which has been pre-trained on ImageNet \cite{Pytorch}, to classify CIFAR-10 data \cite{cifar10}. The standard procedure is to first replace the final fully connected layer of the architecture, to be able to match the number of classes of the target dataset, and then to retrain either the full architecture on the target set or only some of the bottom layers (with layer we refer to convolutional blocks in this setting). In contrast our multirate approach only updates the final fully connected (fc) layer every step and updates the rest of the 
parameters every 5 steps (we have set $k = 5$ in Algorithm \ref{multirateSGDwithmom}). We use as base algorithm SGD with momentum and performed a hyperparameter
search to select the optimal learning rate for full network fine-tuning. 
We use pre-trained ResNet architectures from PyTorch \cite{Pytorch}.  

We compare our multirate approach (blue) to different fine-tuning approaches in Figure \ref{ResNet-34_rescaled}. Our multirate approach can be used to train the network in almost half the time, without reducing 
the test accuracy 
of the resulting net. We show in Figure \ref{ResNet-34_rescaled_lrdecay} and Figure A\ref{ResNet-34_rescaled_WD} in Appendix \ref{sec:variants} that the same observations hold 
when training using linear learning rate decay or weight decay, respectively.
In Figure \ref{ResNet-50_rescaled} we repeat the experiment for a ResNet-50 architecture (pre-trained on ImageNet), which is fine-tuned on CIFAR-100 data, and observe the same behaviour.

We also test our multirate approach on natural language data and consider a pre-trained DistilBERT (obtained from HuggingFace, transformers library). We fine-tune DistilBERT on SST-2 data \cite{SST} and show the computational speed-up and maintained generalization performance obtained using our multirate approach in Figure \ref{distilbert}. Just as for standard fine-tuning approaches, there exists a trade-off between generalization performance and training time. We find that also including the attention block directly preceding the final fully connected layer into the slow parameters further enhances the generalization performance, without significantly increasing the training time. Results for more GLUE benchmark tasks are provided in Appendix \ref{appx:glue}.

\begin{figure}
        \centering
         \includegraphics[width=0.43\textwidth]{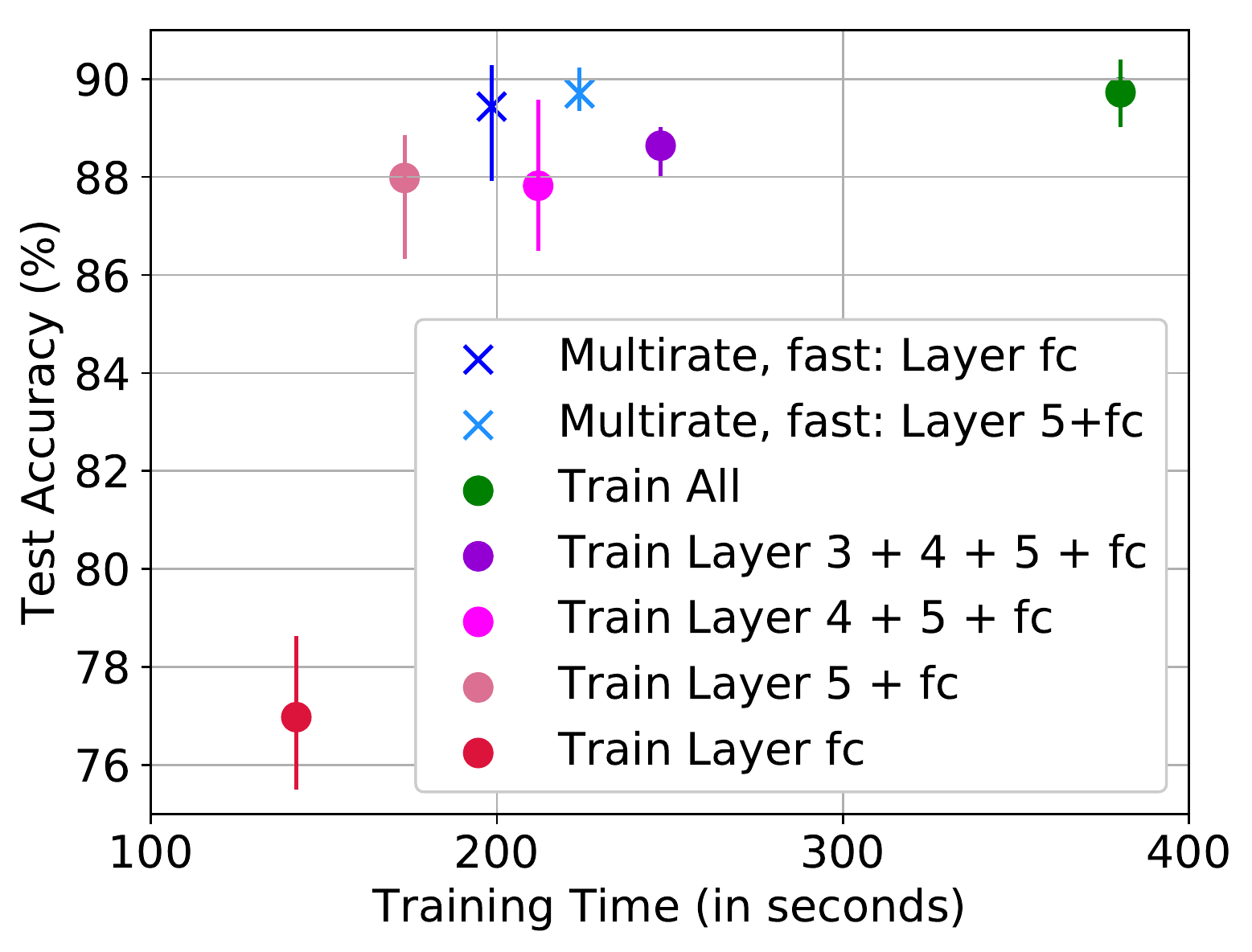}
    \caption{A pre-trained DistilBERT being trained on SST-2 data using different fine-tuning approaches, including our multirate approach (blue). Including the final attention block (together with the fc layer) in the slow parameters gives enhanced generalization performance for limited additional cost (light blue) and lowers the variance across multiple runs. We set $h/k$ = 1e-4, $k = 5,$ $\mu = 0.9$ in Algorithm \ref{multirateSGDwithmom}, batchsize = 16, and average results over 10 runs. \vspace*{-0.4cm} }
    \label{distilbert}
\end{figure}

\subsection{Complexity Analysis}
The number of floating point operations (FLOPs) for a forward pass through a neural network forms the lower bound of the execution time \cite{justus2018}. The number of FLOPs required will depend on the architecture and amount of data, whereas the timing of the FLOPs depends on the hardware used \cite{paleo}. In our case, the number of FLOPs for the forward pass is the same for our multirate algorithm and standard 
approaches. The speed-up we obtain on ResNet architectures and DistilBERT arises from only needing to compute the gradients for the full net every $k$ steps, while computing the gradients for the final fc layer(s) (and optionally the final convolutional/attention block) of the network every backward pass. The backward pass for the multirate method is hence a subset of the backward pass through the full net. Our multirate approach does require storing previous gradients of the slow parameters across iterations, which affects the amount of available memory.

Consider a neural network with $L$ layers and our multirate scheme, where the fast parameters are set to be the final $\ell$ layers of the network with $\ell \ll L$. To get a relative idea of the speed-up obtained using the multirate approach, consider the ratio of the standard forward plus backward pass cost compared to the forward plus backward pass cost for our multirate approach over $k$ steps: 
\begin{align}
    &\frac{\ \ \text{forward + backward pass \ \ \ \ full net \ \ }}{\ \ \text{forward + backward pass \ \ multirate \ \ }}  \\ 
    &= \frac{k L +  \ \ \ \ \ \ \ \ k L\ \ \ \ \ \ \ \ }{kL+L+(k-1)\ell} 
    = \frac{2k L}{(k+1)L+(k-1)\ell}, \ \ \ell \ll L. \nonumber 
\end{align}
For comparison, when only fine-tuning the last $\ell$ layers of the network the 
cost is $k(L+\ell)$, but depending on the choice of $\ell$ this typically results in a lowered generalization performance. If the number of layers $L$ is large, one can obtain a large speed-up using the multirate approach by only having to backpropagate 
through the full network every $k$ steps, while maintaining a similar generalization performance as when fine-tuning the whole network. The exact speed-up obtained depends on the size of the layers and the hardware used. We performed our experiments in PyTorch on NVIDIA DGX-1 GPUs. 
\subsection{Ablation Studies}\label{sec:ablation}
We consider a pre-trained DistilBERT being fine-tuned on SST-2 using our multirate approach (same set-up as in Figure \ref{distilbert}) and perform ablation studies. In Table \ref{tab:distilbert_linearpath} we show that pushing the slow parameters along a linear path (as in Algorithm \ref{multirateSGDwithmom}) improves the test accuracy compared to Algorithm \ref{multirateSGDwithmomwithoutlinear}. We also find that using the same stepsize $h_S = h_F$ for both the fast and slow parameters, but still only updating the slow parameters every $k$ steps, does not lead to the same performance as using larger stepsize $h_S = k \cdot h_F$ for the slow parameters (Table \ref{tab:distilbert_samelr}). 
Finally, we provide a study on the role of $k$ 
(Table A\ref{tab:distilbert_varyk} in Appendix \ref{appx:ablation}). Every epoch the slow parameters only see 1/$k$-th of the minibatches and are updated less frequently with a timestep $k$ times larger than for the fast parameter update. 
We find optimal performance with $k = 5$, although training time can be decreased by choosing larger values of $k$. This trade-off needs to be taken into account when choosing $k$.

\begin{table}[h]
    \centering
        \caption{\textbf{Effect of continuously pushing slow parameters along a linear path.} Same setting as in Figure \ref{distilbert}, where
    fast parameters $\theta_F$ are set to be the fully connected (fc) layer + optionally the final attention block (denoted as layer 5) of a DistilBERT. Results are presented over 10 runs. We compare Algorithm \ref{multirateSGDwithmom} (uses linear drift) to Algorithm \ref{multirateSGDwithmomwithoutlinear}. We find
    that pushing the slow parameters along a linear path during the fast parameter update improves the mean test accuracy. \vspace*{0.2cm}} 
    \label{tab:distilbert_linearpath}
    \begin{tabular}{c||c|c|c|c}
      $\theta_F$ are & \textit{Linear} & \multicolumn{3}{c}{Test accuracy}\\
            Layer & \textit{path?} & Mean & Min & Max  \\ \hline
fc & Yes  & \textbf{89.43\%} & 87.92\% & 90.28\% \\
& No & 88.69\% & 87.53\% & 89.68\% \\ \hline
5 + fc &  Yes   & \textbf{89.70\%} & 89.35\% & 90.23\%\\
& No  & 89.54\% & 88.91\% & 90.44\%\\
    \end{tabular}
\end{table}
\begin{table}[h]
    \caption{\textbf{Same learning rate for fast and slow parameters.} Same setting as in Figure \ref{distilbert} for a DistilBERT. 
    We study the effect of using the same learning rate for both the fast $\theta_F$ and slow $\theta_S$ parameters, but still only updating the slow parameters every $k$ steps. We compare $h_S = h_F =$ 1e-4 vs. using $h_S = k\cdot h_F =$ 5e-4. Results are presented over 10 runs. We observe that using a larger learning rate for the slow parameters aids performance. 
    \vspace*{0.2cm}
    }
    \label{tab:distilbert_samelr}
    \centering
    \begin{tabular}{c||c|c|c|c}
   $\theta_F$ are & \textit{Higher} $h$ & \multicolumn{3}{c}{Test accuracy}\\
            Layer  & \textit{for} $\theta_S$ \textit{?} & Mean & Min & Max \\ \hline
fc & Yes  & \textbf{89.43\%} & 87.92\% & 90.28\% \\
& No & 88.78\% & 87.64\% & 89.90\%\\ \hline
5 + fc &  Yes   & \textbf{89.70\%} & 89.35\% & 90.23\%  \\
& No  & 89.29\% & 88.08\% & 89.95\%\\
    \end{tabular} \\ \ \\
\end{table}

\section{Multirate Training From Scratch}\label{sec:scratch}
Whereas the previous section focused on using a multirate approach to obtain computational speed-up in transfer learning settings, we will now discuss how multirate training can be used to enhance the generalization performance of neural networks trained from scratch. We already illustrated this for the patch-augmented CIFAR-10 data set in Figure \ref{fig:patches}, where a two-scale multirate approach can both memorize the patch and simultaneously obtain good performance on clean data, whereas fixed learning rate approaches fail to do both. In this section we will show the potential of using a multirate approach to regularize neural networks.

\subsection{A Multirate Approach for Neural Network Regularization} \label{sec:randomsubgroups} 
Instead of using layer-wise partitioning, in this section we use randomly selected subsets of the neural network weight matrices (and optionally the biases) to form the slow parameters in Algorithm \ref{multirateSGDwithmom}. Every $k$ optimization steps a different subset of the network parameters is randomly selected. For the technique presented here we slightly modify Algorithm \ref{multirateSGDwithmom}, by setting all the slow parameters $\theta_S$ to be zero during the $k$ fast weight updates.
After the fast parameter update, the slow parameters resume their previous value. They are then updated 
together with the fast weights in a single step, but using a larger time-step $h\cdot k$ for the slow weights. 

The base algorithm we use is again SGD. 
In Figure \ref{fig:randomsubset} we show that our multirate technique can be used to obtain enhanced performance on a single hidden layer perceptron applied to MNIST data. Our technique is  inspired by Dropout \cite{dropout}, although there are some important differences: we do not modify the network architecture and keep the `slow' weights deactivated for multiple steps after which we update them with a larger time-step. We show in Figure \ref{fig:randomsubset} the importance of the multirate aspect (blue), i.e. removing the multirate component from our approach results in worse performance (orange). Further, we compare our technique with dropout in Figure \ref{fig:randomsubsetpenntreebank} for a small transformer trained on the Penn Treebank dataset \cite{PennTreebank} and obtain enhanced validation loss. Ablation studies for $k$ and uncoupled learning rates are provided in Table A\ref{tab:mnist_varyk} and Table A\ref{tab:penntreebank_varyk} in Appendix \ref{appx:ablation}.

\begin{figure}[h]
    \centering
    \includegraphics[width=0.82\linewidth]{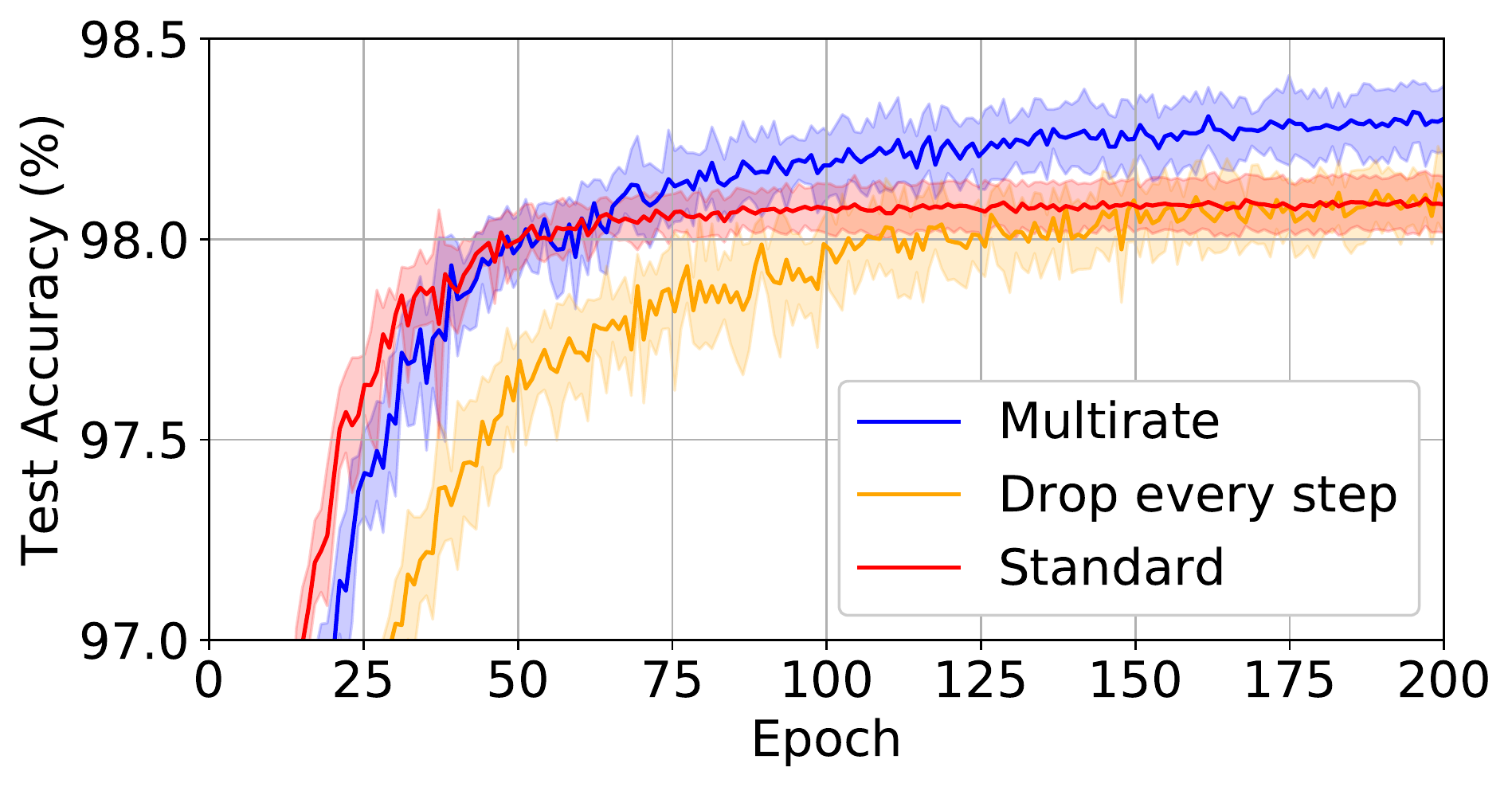}
        \vspace*{-0.35cm}
    \caption{Single hidden layer perceptron trained on MNIST using SGD with $h = 0.1$. Our multirate technique with $k = 5$ (blue) de-activates weights in the input and hidden layer with a probability of 0.8 and 0.5, respectively, and obtains a higher test accuracy than standard SGD (red). We also test removing the multirate component from our approach, which results in an algorithm which sets a different part of the weights to zero every step (orange), and does not perform as well as the multirate technique (blue).} 
    \label{fig:randomsubset} 
\end{figure}
\begin{figure}[h]
    \centering
    \includegraphics[width=0.82\linewidth]{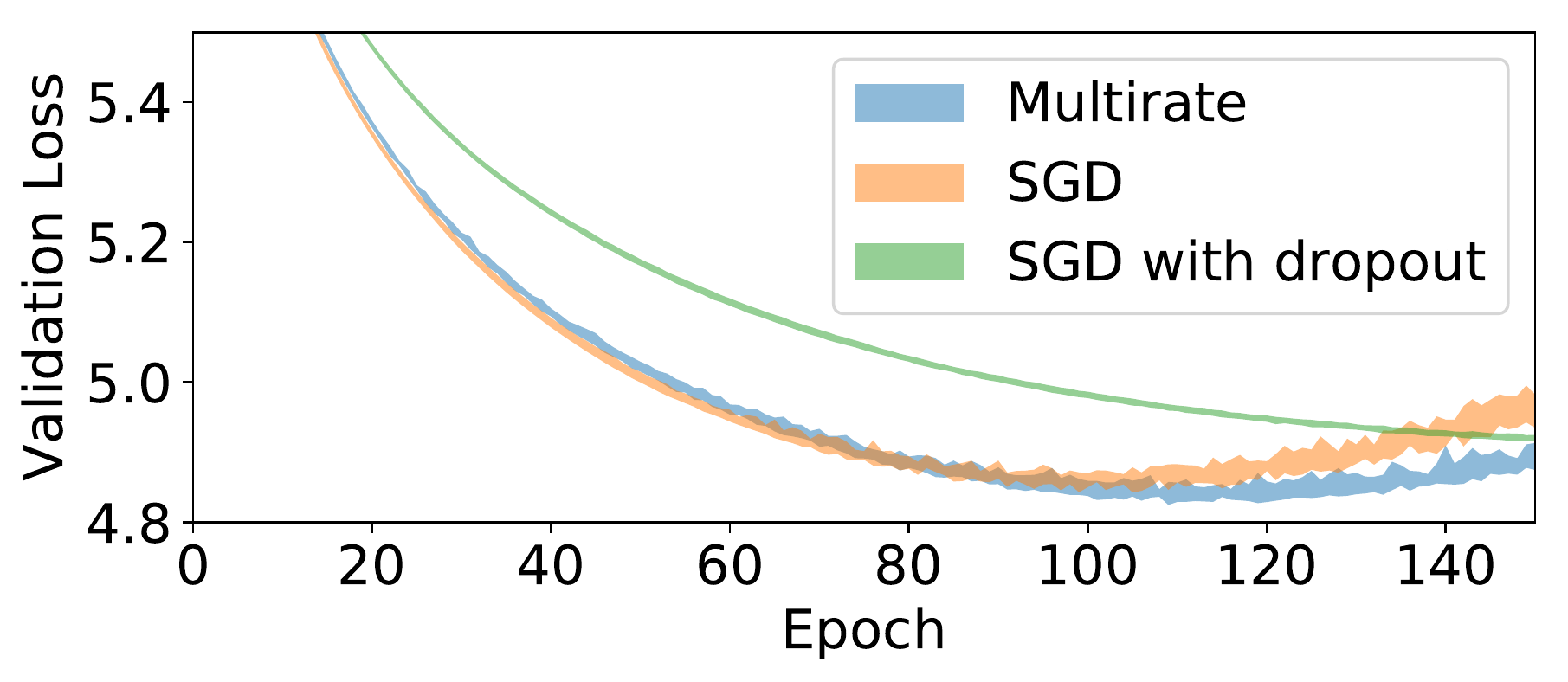}
    \vspace*{-0.35cm}
    \caption{
    A transformer trained on Penn Treebank data \cite{PennTreebank} using SGD with $h = 0.1$ and batchsize 128. Results are provided over 10 runs. The transformer has 2 encoder layers, where each encoder layer consists of self-attention with 2 heads and a feedforward network with 200 nodes followed by layer norms. We compare our multirate technique with $k = 5$ (blue) with vanilla SGD (orange) and SGD with appropriately tuned dropout (green) for the encoder layers. Our multirate approach obtains lower validation loss.}
    \label{fig:randomsubsetpenntreebank}
\end{figure}

\section{Related Work}\label{sec:lit} 
Intuition for using fast and slow weights in a machine learning context can be found in neuroscience, as synapses operate at different time scales. One of the earliest mentions of fast and slow weights in the machine learning literature was by \citet{HintonPlautfastweights}, who set each connection to have both a rapidly changing weight (which was supposed to act as a temporary memory) and a slowly changing weight which stores long-term knowledge. More recently, \citet{Ba2016} used fast weights as a temporary memory to improve recurrent neural networks. 

Our multirate approach to transfer learning (Section \ref{sec:transferlearning}) has 
similarities to multiple time-stepping techniques used in molecular dynamics, such as r-RESPA
\cite{Tuckerman1991, Tuckerman1992}, where the fast dynamics is typically cheap to compute in comparison with the slow dynamics. This is similar to our transfer learning application, where we set the final layer of the net to be the fast part to obtain computational speed-up. Although the use of more refined transfer learning schemes may lead to further test accuracy enhancement, the focus of our approach is to obtain significant computational speed-up, while maintaining the same test accuracy. 
Further, we introduce linear drift of the slow parameters during the fast parameter update. The use of linear drift draws inspiration from the reversible averaging approach to multiple time-stepping by \citet{Leimkuhler2001}, but forms a novel technique for multirate methods. 

Further inspiration arises from the use of partitioned integrators for neural network training. It is well-known that different layers play different roles \cite{Zhang2019} and that later layers capture more task-specific knowledge, while early layers capture more general features, which can be shared across tasks \cite{Yosinski2014,Hao2019,Neyshabur2020,Raghu2019}. It is hence natural to train the different layers of the neural network using layer-wise adaptive learning rates \cite{LARS}, layer-wise large-batch optimization techniques \cite{LAMB}, using different optimizers \cite{LMV}, or by only being Bayesian for certain layers \cite{kristiadi2020,murfet2020}. 

The multirate regularization technique (Section \ref{sec:randomsubgroups}) has similarities to Dropout \cite{dropout} and DropConnect \cite{dropconnect}. 
Dropout can enhance the robustness and generalization performance of neural networks, and is used widely, although its performance in combination with batch normalization \cite{batchnorm} is an ongoing area of research \cite{understandingBN, BNdrop1,BNdrop2}. In contrast to dropout and its variants we do not modify the network architecture but incorporate our technique inside the 
optimizer, which randomly selects a subset of the weights as the slow part and keeps these de-activated for multiple steps. 
These weights are then re-activated and 
updated with a larger time-step, before de-activating a different subset. Instead of making strong claims, in this work we merely aim to illustrate the potential of using multirate techniques as a manner of regularization. We see an exploration of multirate variants of dropout as an exciting avenue for future work. 

\section{Discussion and Future Work}\label{sec:discussion}
We outline possible directions for future work using multirate methods for neural network training along two axes: 1) different splitting choices of the neural network parameters into fast and slow parts and 2) using different optimizers or optimizer hyperparameter settings to train the different partitions. Our methods can be further generalized by combining them with well-known machine learning techniques, such as dropout or by exploring their behaviour under learning rate scheduling.

\textbf{Splitting choices.} For our multirate training approach 
we need to separate the neural network parameters into fast and slow parts. We illustrated the potential of this approach for different parameters splittings. 
In Section \ref{sec:randomsubgroups} we used random subgroups, where we randomly selected a different subset of the network parameters to be the slow parameters every $k$ optimization steps. 
In Section \ref{sec:transferlearning} we used a layer-wise partitioning, where we set the final layer(s)
to be the fast parameters and the remaining parameters to be the slow parameters, for transfer learning applications. 

An interesting direction for future work is to further explore layer-wise splitting when training networks from scratch, e.g., one could separate the early from later layers and train these with different time scales. It is important to note that the computational speed-up we obtained for the transfer learning setting by only computing the gradients for the final layer(s) at every step (Section \ref{sec:transferlearning}), does not easily transfer to training from scratch, where the same approach significantly reduces generalization performance (earlier layers need to be updated more frequently to train well). Although for different choices of the layer-wise splitting the computational speed-up is lost, the use of layer-wise partitioned multirate algorithms may still enhance generalization performance compared to vanilla optimizers. \citet{LARS} found that layer-wise adaptive learning rates can aid training. Further, network layers were shown to have different sensitivities to re-initialization \cite{Zhang2019} and to optimizer hyperparameter settings such as the learning rate \cite{VF}. This motivates training different layers with different initializations or learning rates. 

Another splitting option is to set the biases of a multi-layer perceptron architecture to be the slow parameters, while keeping the weights on the fast time scale. In Appendix \ref{sec:slowbias} we show that using this approach we can obtain higher test accuracies on spiral data and provide ablation studies. 
This illustrates the potential of other parameter splittings.
\ \\ \ \\
\textbf{Hybrid optimization schemes.} For our multirate approach we partition the network into multiple parts which we train on different time scales. A natural extension is to also use different optimizers or optimizer hyperparameters to train the different partitions, e.g., using SGD for the slow part, but SGD with momentum for the fast part(s),
or using sampling techniques such as SGLD or discretized underdamped Langevin dynamics for certain parts. The latter was considered for layer-wise partitionings in \citet{LMV} and \citet{murfet2020}. In this work we used the same base algorithm for all partitions to keep the focus on the role of different time scales.
However, we expect that 
further performance enhancement may be achieved by using hybrid optimization schemes.

\section{Conclusion} This work illustrates the potential of multirate methods for various neural network training applications. In particular, we show that a multirate approach can be used to significantly reduce the computational cost for fine-tuning neural networks, without losing test accuracy or requiring extensive hyperparameter tuning. By introducing the use of multirate techniques to the machine learning community, showing their use in different training settings, and outlining various directions for future work, we hope to have built a strong foundation for further research in this area.
 
 \section*{Acknowledgements}
We thank the reviewers for many helpful comments. The authors wish to thank Katerina Karoni for providing valuable comments on the original proof of Theorem B.4 that led to the creation of the revised version found in Appendix B. During the creation of the paper Tiffany Vlaar was supported by The Maxwell Institute Graduate School in Analysis and its Applications, a Centre for Doctoral Training funded by the UK Engineering and Physical Sciences Research Council (grant EP/L016508/01), the Scottish Funding Council, Heriot-Watt University and the University of Edinburgh.
 
\bibliography{references}

\begin{thebibliography}{53}
\providecommand{\natexlab}[1]{#1}
\providecommand{\url}[1]{\texttt{#1}}
\expandafter\ifx\csname urlstyle\endcsname\relax
  \providecommand{\doi}[1]{doi: #1}\else
  \providecommand{\doi}{doi: \begingroup \urlstyle{rm}\Url}\fi

\bibitem[Ba et~al.(2016)Ba, Hinton, Mnih, Leibo, and Ionescu]{Ba2016}
Ba, J., Hinton, G., Mnih, V., Leibo, J.~Z., and Ionescu, C.
\newblock Using fast weights to attend to the recent past.
\newblock \emph{NeurIPS}, 2016.

\bibitem[Bello et~al.(2021)Bello, Fedus, Du, Cubuk, Srinivas, Lin, Shlens, and
  Zoph]{revisitresnet}
Bello, I., Fedus, W., Du, X., Cubuk, E.~D., Srinivas, A., Lin, T.-Y., Shlens,
  J., and Zoph, B.
\newblock Revisiting {R}es{N}ets: {I}mproved training and scaling strategies.
\newblock \emph{NeurIPS}, 2021.

\bibitem[Chen et~al.(2019)Chen, Chen, Shi, Hsieh, Liao, and Zhang]{BNdrop2}
Chen, G., Chen, P., Shi, Y., Hsieh, C., Liao, B., and Zhang, S.
\newblock Rethinking the usage of batch normalization and dropout in the
  training of deep neural networks.
\newblock \emph{arXiv:1905.05928}, 2019.

\bibitem[Constantinescu \& Sandu(2013)Constantinescu and
  Sandu]{Constantinescu2013}
Constantinescu, E. and Sandu, A.
\newblock Extrapolated multirate methods for differential equations with
  multiple time scales.
\newblock \emph{Journal of Scientific Computing}, 56\penalty0 (1):\penalty0
  28--44, 2013.

\bibitem[Dai \& Le(2015)Dai and Le]{DaiLe2015}
Dai, A. and Le, Q.
\newblock Semi-supervised sequence learning.
\newblock \emph{NIPS}, 2015.

\bibitem[Devlin et~al.(2018)Devlin, Chang, Lee, and Toutanova]{BERT}
Devlin, J., Chang, M.-W., Lee, K., and Toutanova, K.
\newblock {BERT}: {P}re-training of deep bidirectional transformers for
  language understanding.
\newblock \emph{arXiv preprint arXiv:1810.04805}, 2018.

\bibitem[Engstler \& Lubich(1997)Engstler and Lubich]{Engstler1997}
Engstler, C. and Lubich, C.
\newblock Multirate extrapolation methods for differential equations with
  different time scales.
\newblock \emph{Computing}, 58:\penalty0 173--185, 1997.

\bibitem[Feldman(1982)]{feldmanfastweights}
Feldman, J.~A.
\newblock Dynamic connections in neural networks.
\newblock \emph{Biological Cybernetics}, 46\penalty0 (1):\penalty0 27--39,
  1982.

\bibitem[Gear \& Wells(1984)Gear and Wells]{GearWells1984}
Gear, C. and Wells, D.
\newblock Multirate linear multistep methods.
\newblock \emph{BIT}, 24:\penalty0 484--502, 1984.

\bibitem[Gear(1974)]{Gear1974}
Gear, C.~W.
\newblock Multirate methods for ordinary differential equations.
\newblock \emph{Technical Report, University of Illinois}, 1974.

\bibitem[G{\"u}nther \& Rentrop(1993)G{\"u}nther and Rentrop]{Gunther1993}
G{\"u}nther, M. and Rentrop, P.
\newblock Multirate row methods and latency of electric circuits.
\newblock \emph{Appl. Numer. Math.}, 13:\penalty0 83--102, 1993.

\bibitem[Hao et~al.(2019)Hao, Dong, Wei, and Xu]{Hao2019}
Hao, Y., Dong, L., Wei, F., and Xu, K.
\newblock Visualizing and understanding the effectiveness of {BERT}.
\newblock \emph{EMNLP}, 2019.

\bibitem[He et~al.(2016)He, Zhang, Ren, and Sun]{resnet}
He, K., Zhang, X., Ren, S., and Sun, J.
\newblock Deep residual learning for image recognition.
\newblock In \emph{Proceedings of the IEEE conference on computer vision and
  pattern recognition}, pp.\  770--778, 2016.

\bibitem[He et~al.(2019)He, Girshick, and Dollár]{He2019}
He, K., Girshick, R., and Dollár, P.
\newblock Rethinking {I}mage{N}et pre-training.
\newblock \emph{ICCV}, 2019.

\bibitem[Hinton \& Plaut(1987)Hinton and Plaut]{HintonPlautfastweights}
Hinton, G.~E. and Plaut, D.~C.
\newblock Using fast weights to deblur old memories.
\newblock \emph{In Proceedings of the ninth annual conference of the Cognitive
  Science Society}, pp.\  177--186, 1987.

\bibitem[Howard \& Ruder(2018)Howard and Ruder]{HowardRuder2018}
Howard, J. and Ruder, S.
\newblock Universal language model fine-tuning for text classification.
\newblock \emph{ACL}, 2018.

\bibitem[Huh et~al.(2016)Huh, Agrawal, and Efros]{Huh2016}
Huh, M., Agrawal, P., and Efros, A.~A.
\newblock What makes {I}mage{N}et good for transfer learning?
\newblock \emph{arXiv:1608.08614}, 2016.

\bibitem[Ioffe \& Szegedy(2015)Ioffe and Szegedy]{batchnorm}
Ioffe, S. and Szegedy, C.
\newblock Batch normalization: Accelerating deep network training by reducing
  internal covariate shift.
\newblock \emph{ICML}, 2015.

\bibitem[Justus et~al.(2018)Justus, Brennan, Bonner, and McGough]{justus2018}
Justus, D., Brennan, J., Bonner, S., and McGough, A.
\newblock Predicting the computational cost of deep learning models.
\newblock \emph{CoRR}, 2018.

\bibitem[Kristiadi et~al.(2020)Kristiadi, Hein, and Hennig]{kristiadi2020}
Kristiadi, A., Hein, M., and Hennig, P.
\newblock Being {B}ayesian, even just a bit, fixes overconfidence in {R}e{LU}
  networks.
\newblock \emph{ICML}, 2020.

\bibitem[Krizhevsky \& Hinton(2009)Krizhevsky and Hinton]{cifar10}
Krizhevsky, A. and Hinton, G.
\newblock Learning multiple layers of features from tiny images.
\newblock 2009.

\bibitem[Leimkuhler \& Reich(2001)Leimkuhler and Reich]{Leimkuhler2001}
Leimkuhler, B. and Reich, S.
\newblock A reversible averaging integrator for multiple time- scale dynamics.
\newblock \emph{Journal of Computational Physics}, 171\penalty0 (1):\penalty0
  95--114, 2001.

\bibitem[Leimkuhler et~al.(2019)Leimkuhler, Matthews, and Vlaar]{LMV}
Leimkuhler, B., Matthews, C., and Vlaar, T.
\newblock {P}artitioned integrators for thermodynamic parameterization of
  neural networks.
\newblock \emph{Foundations of Data Science}, 1\penalty0 (4):\penalty0
  457--489, 2019.

\bibitem[Li et~al.(2020)Li, Chaudhari, Yang, Lam, Ravichandran, Bhotika, and
  Soatto]{Li2020}
Li, H., Chaudhari, P., Yang, H., Lam, M., Ravichandran, A., Bhotika, R., and
  Soatto, S.
\newblock Rethinking the hyperparameters for fine-tuning.
\newblock \emph{ICLR}, 2020.

\bibitem[Li et~al.(2018)Li, Chen, Hu, and Yang]{BNdrop1}
Li, X., Chen, S., Hu, X., and Yang, J.
\newblock Understanding the disharmony between dropout and batch normalization
  by variance shift.
\newblock \emph{IEEE/CVF Conference on Computer Vision and Pattern
  Recognition}, 2018.

\bibitem[Li et~al.(2019)Li, Wei, and Ma]{Li2019}
Li, Y., Wei, C., and Ma, T.
\newblock Towards explaining the regularization effect of initial large
  learning rate in training neural networks.
\newblock \emph{NeurIPS}, 2019.

\bibitem[Liu et~al.(2019)Liu, Ott, Goyal, Du, Joshi, Chen, Levy, Lewis,
  Zettlemoyer, and Stoyanov]{roberta}
Liu, Y., Ott, M., Goyal, N., Du, J., Joshi, M., Chen, D., Levy, O., Lewis, M.,
  Zettlemoyer, L., and Stoyanov, V.
\newblock Ro{BERT}a: {A} robustly optimized {BERT} pretraining approach.
\newblock \emph{arXiv:1907.11692}, 2019.

\bibitem[Luo et~al.(2019)Luo, Wang, Shao, and Pen]{understandingBN}
Luo, P., Wang, X., Shao, W., and Pen, Z.
\newblock Towards understanding regularization in batch normalization.
\newblock \emph{ICLR}, 2019.

\bibitem[Marcus et~al.(1993)Marcus, Santorini, and Marcinkiewicz]{PennTreebank}
Marcus, M.~P., Santorini, B., and Marcinkiewicz, M.~A.
\newblock Building a large annotated corpus of {E}nglish: The {P}enn
  {T}reebank.
\newblock \emph{Computational Linguistics}, 19\penalty0 (2):\penalty0 313--330,
  1993.

\bibitem[Mattingly et~al.(2002)Mattingly, A.M.Stuart, and Higham]{MaStHi2002}
Mattingly, J., A.M.Stuart, and Higham, D.
\newblock Ergodicity for {SDEs} and approximations: locally {L}ipschitz vector
  fields and degenerate noise.
\newblock \emph{Stochastic Processes and their Applications}, 101\penalty0
  (2):\penalty0 185--232, 2002.

\bibitem[Murfet et~al.(2020)Murfet, Wei, Gong, Li, Gell-Redman, and
  Quella]{murfet2020}
Murfet, D., Wei, S., Gong, M., Li, H., Gell-Redman, J., and Quella, T.
\newblock Deep learning is singular, and that’s good.
\newblock \emph{arXiv:2010.11560}, 2020.

\bibitem[Neyshabur et~al.(2020)Neyshabur, Sedghi, and Zhang]{Neyshabur2020}
Neyshabur, B., Sedghi, H., and Zhang, C.
\newblock What is being transferred in transfer learning?
\newblock \emph{NeurIPS}, 2020.

\bibitem[Paszke et~al.(2017)Paszke, Gross, Chintala, Chanan, Yang, DeVito, Lin,
  Desmaison, Antiga, and Lerer]{Pytorch}
Paszke, A., Gross, S., Chintala, S., Chanan, G., Yang, E., DeVito, Z., Lin, Z.,
  Desmaison, A., Antiga, L., and Lerer, A.
\newblock Automatic differentiation in {P}y{T}orch.
\newblock 2017.

\bibitem[Polyak(1964)]{GDwithmom}
Polyak, B.~T.
\newblock Some methods of speeding up the convergence of iteration methods.
\newblock \emph{USSR Computational Mathematics and Mathematical Physics},
  4\penalty0 (5):\penalty0 1--17, 1964.

\bibitem[Qi et~al.(2017)Qi, Sparks, and Talwalkar]{paleo}
Qi, H., Sparks, E.~R., and Talwalkar, A.
\newblock {PALEO}: A performance model for deep neural networks.
\newblock \emph{ICLR}, 2017.

\bibitem[Radford et~al.(2018)Radford, Narasimhan, Salimans, and
  Sutskever]{Radford2018}
Radford, A., Narasimhan, K., Salimans, T., and Sutskever, I.
\newblock Improving language understanding by generative pre-training.
\newblock \emph{OpenAI}, 2018.

\bibitem[Raghu et~al.(2019)Raghu, Zhang, Kleinberg, and Bengio]{Raghu2019}
Raghu, M., Zhang, C., Kleinberg, J., and Bengio, S.
\newblock Transfusion: Understanding transfer learning for medical imaging.
\newblock \emph{NeurIPS}, 2019.

\bibitem[Rice(1960)]{Rice1960}
Rice, J.~R.
\newblock Split runge-kutta methods for simultaneous equations.
\newblock \emph{Journal of Research of the National Institute of Standards and
  Technology}, 60, 1960.

\bibitem[Socher et~al.(2013)Socher, Perelygin, Wu, Chuang, Manning, Ng, and
  Potts]{SST}
Socher, R., Perelygin, A., Wu, J., Chuang, J., Manning, C., Ng, A., and Potts,
  C.
\newblock Recursive deep models for semantic compositionality over a sentiment
  treebank.
\newblock \emph{Conference on Empirical Methods in Natural Language
  Processing}, 2013.

\bibitem[Srivastava et~al.(2014)Srivastava, Hinton, Krizhevsky, Sutskever, and
  Salakhutdinov]{dropout}
Srivastava, N., Hinton, G., Krizhevsky, A., Sutskever, I., and Salakhutdinov,
  R.
\newblock Dropout: a simple way to prevent neural networks from overfitting.
\newblock \emph{JMLR}, 15:\penalty0 1929--1958, 2014.

\bibitem[Sutskever et~al.(2013)Sutskever, Martens, Dahl, and Hinton]{sutskever}
Sutskever, I., Martens, J., Dahl, G., and Hinton, G.
\newblock On the importance of initialization and momentum in deep learning.
\newblock \emph{ICML}, 2013.

\bibitem[Tuckerman et~al.(1991)Tuckerman, Berne, and Martyna]{Tuckerman1991}
Tuckerman, M.~E., Berne, B.~J., and Martyna, G.~J.
\newblock Molecular dynamics algorithm for multiple time scales: Systems with
  long range forces.
\newblock \emph{The Journal of Chemical Physics}, 94\penalty0 (10):\penalty0
  6811--6815, 1991.

\bibitem[Tuckerman et~al.(1992)Tuckerman, Berne, and Martyna]{Tuckerman1992}
Tuckerman, M.~E., Berne, B.~J., and Martyna, G.~J.
\newblock Reversible multiple time scale molecular dynamics.
\newblock \emph{The Journal of Chemical Physics}, 97\penalty0 (3):\penalty0
  1990--2001, 1992.

\bibitem[Vlaar \& Frankle(2022)Vlaar and Frankle]{VF}
Vlaar, T. and Frankle, J.
\newblock What can linear interpolation of neural network loss landscapes tell
  us?
\newblock \emph{ICML}, 2022.

\bibitem[Wan et~al.(2013)Wan, Zeiler, Zhang, Cun, and Fergus]{dropconnect}
Wan, L., Zeiler, M., Zhang, S., Cun, Y.~L., and Fergus, R.
\newblock Regularization of neural networks using {D}rop{C}onnect.
\newblock \emph{ICML}, 2013.

\bibitem[Wang et~al.(2019)Wang, Singh, Michael, Hill, Levy, and Bowman]{glue}
Wang, A., Singh, A., Michael, J., Hill, F., Levy, O., and Bowman, S.~R.
\newblock {GLUE}: {A} multi-task benchmark and analysis platform for natural
  language understanding.
\newblock \emph{ICLR}, 2019.

\bibitem[Welling \& Teh(2011)Welling and Teh]{WeT11}
Welling, M. and Teh, Y.~W.
\newblock {B}ayesian learning via stochastic gradient {L}angevin dynamics.
\newblock \emph{ICML}, pp.\  681--688, 2011.

\bibitem[Wenzel et~al.(2020)Wenzel, Roth, Veeling, Swiatkowski, Tran, Mandt,
  Snoek, Salimans, Jenatton, and Nowozin]{coldposterior}
Wenzel, F., Roth, K., Veeling, B.~S., Swiatkowski, J., Tran, L., Mandt, S.,
  Snoek, J., Salimans, T., Jenatton, R., and Nowozin, S.
\newblock How good is the {B}ayes posterior in deep neural networks really?
\newblock \emph{arXiv:2002.02405}, 2020.

\bibitem[Yosinski et~al.(2014)Yosinski, Clune, Bengio, and
  Lipson]{Yosinski2014}
Yosinski, J., Clune, J., Bengio, Y., and Lipson, H.
\newblock How transferable are features in deep neural networks?
\newblock \emph{NIPS}, 2014.

\bibitem[You et~al.(2017)You, Gitman, and Ginsburg]{LARS}
You, Y., Gitman, I., and Ginsburg, B.
\newblock Large batch training of convolutional networks.
\newblock \emph{arXiv preprint arXiv:1708.03888}, 2017.

\bibitem[You et~al.(2020)You, Li, Reddi, Hseu, Kumar, Bhojanapalli, Song,
  Demmel, Keutzer, and Hsieh]{LAMB}
You, Y., Li, J., Reddi, S., Hseu, J., Kumar, S., Bhojanapalli, S., Song, X.,
  Demmel, J., Keutzer, K., and Hsieh, C.
\newblock Large batch optimization for deep learning: training {BERT} in 76
  minutes.
\newblock \emph{ICLR}, 2020.

\bibitem[Zhang et~al.(2019)Zhang, Bengio, and Singer]{Zhang2019}
Zhang, C., Bengio, S., and Singer, Y.
\newblock Are all layers created equal?
\newblock \emph{arXiv:1902.01996}, 2019.

\bibitem[Zhang et~al.(2020)Zhang, Karimireddy, Veit, Kim, Reddi, Kumar, and
  Sra]{AdamvsSGD}
Zhang, J., Karimireddy, S., Veit, A., Kim, S., Reddi, S., Kumar, S., and Sra,
  S.
\newblock Why are adaptive methods good for attention models?
\newblock \emph{NeurIPS}, 2020.

\end{thebibliography}
\bibliographystyle{icml2022}

\newpage
\appendix
\onecolumn
\section{Variants of our Multirate Training Algorithms}\label{sec:variants}
Our multirate training scheme partitions the network parameters into multiple components (Algorithm 1 and 2).
This setting lends itself naturally to training the different components (or copies) using different optimization strategies. 

To discuss this more concretely, recall Langevin dynamics from Eq. \eqref{Langevin} in the main paper:
\begin{align*}
  \text{d}\theta_{\alpha} &= p_{\alpha}\ \text{d}t, 
  \\ \text{d}p_{\alpha} &= \tilde{G}_{\alpha}(\theta)\text{d}t-\gamma_{\alpha} p_{\alpha}\  \text{d}t+\sqrt{2\gamma_{\alpha}\tau_{\alpha}}\ \text{d}W_{\alpha}, \ \text{where} \ {\alpha} = F,S,
\end{align*}
with neural network parameters $\theta= (\theta_F,\theta_S)\in \mathbb{R}^n$, momentum $p= (p_F,p_S)\in \mathbb{R}^n$, noisy (due to subsampling) gradient $\tilde{G}_{\alpha}(\theta)$ of the loss with respect to $\theta_{\alpha}$, Wiener process $W$, and hyperparameters $\gamma_{\alpha},\tau_{\alpha} > 0$. Using discretized Langevin dynamics to train neural networks allows for incorporation of both momentum and additive noise, the size of which is controlled by the $\gamma_{\alpha}$ and $\tau_{\alpha}$ hyperparameters, respectively. A straightforward variant is thus to use different values for $\gamma_{\alpha}$ and/or $\tau_{\alpha}$ for the fast and slow components.
The temperature hyperparameter $\tau_{\alpha}$ controls the driving noise and thus the transition between a pure optimization and sampling approach. When small it can benefit neural network optimization \cite{LMV,coldposterior}. Using small values of $\tau$ for parts of the dynamics that require further exploration may thus benefit training. On the other hand, to train a component with stochastic gradient descent with momentum one can set $\tau = 0$. An example of a possible combination of Langevin dynamics with additive noise for the fast dynamics and without additive noise (corresponding to SGD) for the slow dynamics is then:
\begin{align*}
  \text{d}\theta_{F} &= p_{F}\ \text{d}t, \ \text{d}p_{F} = \tilde{G}_{F}(\theta)\text{d}t-\gamma_{F} p_{F}\  \text{d}t+\sqrt{2\gamma_F\tau}\ \text{d}W_{F} \\
    \text{d}\theta_{S} &= p_{S}\ \text{d}t, \ \text{d}p_{S} = \tilde{G}_{S}(\theta)\text{d}t-\gamma_{S} p_{S}\  \text{d}t.
\end{align*}
Equivalently, one could change the value of the momentum hyperparameter $\gamma_{\alpha}$ for the different partitionings. Or use different optimizers for the different components, such as Adam and SGD. Finally, it would be interesting to study the effect of using different size initializations for the different components, which essentially starts off the different components on different scales. Of course, any of these suggestions require extra tuning of the algorithm, which is why we focused on SGD with the same momenta values and initializations for all components in the paper. 
We expect however that using hybrid optimization schemes may lead to even further performance enhancement, which we aim to explore in future work. 

\textbf{Weight decay.} As a simple extension of Algorithm 1 we provide the case with weight decay in Algorithm A\ref{multirateSGDwithmomWD}, where we have used $\omega$ to denote the amount of weight decay. Our implementation of weight decay is the same as used in PyTorch for SGD \cite{Pytorch}. Due to the use of linear drift all parameters are trained using the same learning rate (although slow parameter gradients are only updated every $k$ steps), which is why no extra tuning of the weight decay is needed. We show in Figure A\ref{ResNet-34_rescaled_WD} for a pre-trained ResNet-34 being fine-tuned on CIFAR-10 data  (same set-up as in Figure \ref{ResNet-34_rescaled}) that using Algorithm A\ref{multirateSGDwithmomWD} our multirate approach can train the network in almost half the time, without reducing the test accuracy.

\makeatletter
\renewcommand{\fnum@figure}{\figurename~A\thefigure}
\makeatother

\begin{figure}[h]
\centering
    \includegraphics[width=0.8\linewidth]{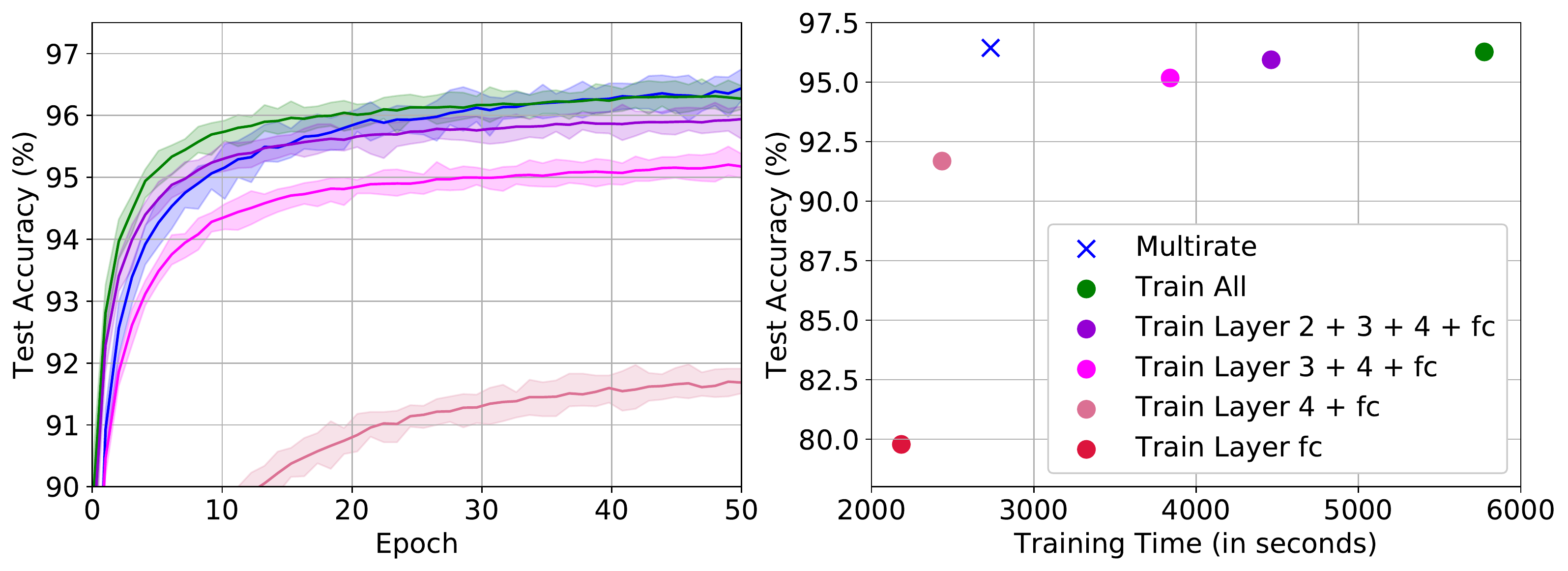}
    \vspace*{-0.35cm}
    \caption{A pre-trained ResNet-34 being trained on CIFAR-10 data (same set-up as in Figure \ref{ResNet-34_rescaled}) using different fine-tuning approaches and our multirate approach Algorithm A\ref{multirateSGDwithmomWD} (blue).  Results are averaged over 20 runs and approaches are trained using SGD with momentum with weight decay. We set $h/k = 0.001, k = 5, \mu = 0.9$, and $\omega_S = \omega_F =  $ 5e-4  in Algorithm A\ref{multirateSGDwithmomWD}. Our multirate approach (blue) can be used to train the net in almost half the time, while maintaining generalization performance. Typical fine-tuning approaches only train the bottom layers of the network, e.g., layer 4 + fc, which results in a comparable speed-up, but much lower test accuracy.} 
    \label{ResNet-34_rescaled_WD}
\end{figure}

\makeatletter
\renewcommand{\ALG@name}{Algorithm A\hspace*{-0.1cm}}
\makeatother
\begin{algorithm}[H] 
 \caption{Multirate SGD with linear drift and weight decay} 
 \label{multirateSGDwithmomWD}
\begin{algorithmic}
\STATE $p_S := \mu p_S + \nabla_{\theta_S} \mathcal{L}(\theta_S,\theta_F)+\omega_S\theta_S $
\FOR{$i=1,2,...,k$}
\STATE $p_F := \mu p_F + \nabla_{\theta_F}\mathcal{L}(\theta_S,\theta_F) +\omega_F \theta_F $
\STATE $\theta_F := \theta_F-\frac{h}{k}p_F$
\STATE $\theta_S := \theta_S-\frac{h}{k}p_S$ 
\ENDFOR
\end{algorithmic}
\end{algorithm}
\section{Convergence Analysis}\label{appx:conv}
Recall our main assumptions:
\begin{assumption} We assume function $f: \mathbb{R}^n\rightarrow \mathbb{R}$ to be $L$-smooth, i.e., $f$ is continuously differentiable and its gradient is Lipschitz continuous with Lipschitz constant $L > 0$:
\begin{align}
    \|\nabla f(\varphi)-\nabla f(\theta)\|_2\leq L\|\varphi-\theta\|_2, \ \forall \theta,\varphi\in\mathbb{R}^n.
\end{align}\label{Aassm:Lsmooth}
\end{assumption}
\begin{assumption}
We assume that the second moment of the stochastic gradient is bounded above, i.e., there exists a constant $M$ for any sample $x_i$ such that
\begin{align}
    \|\nabla f_{x_i}(\theta)\|^2_2 \leq M, \   \ \forall \theta \in \mathbb{R}^n.
\end{align}\label{Aassm:boundedvariance}
\end{assumption}
Assumption \ref{Aassm:boundedvariance} guarantees the variance of the stochastic gradient to be less than $M$, because
\begin{align}
    \text{Var}(\nabla f_{x_i}(\theta)) &= \mathbb{E}\|\nabla f_{x_i}(\theta)-\mathbb{E}[\nabla f_{x_i}(\theta)]\|^2_2  \nonumber \\
    &= \mathbb{E}\|\nabla f_{x_i}(\theta)-\nabla f(\theta)\|^2_2 \nonumber \\
    &= \mathbb{E}\|\nabla f_{x_i}(\theta)\|^2_2-\|\nabla f(\theta)\|^2_2 
\end{align}
where we used $\mathbb{E}[\nabla f_{x_i}(\theta)] = \nabla f(\theta)$ (unbiased gradient) for the second equality and Var$(X) = \mathbb{E}[(X-\mathbb{E}[X])^2] = \mathbb{E}[X^2]-\mathbb{E}[X]^2$.

If we assume Assumption \ref{Aassm:Lsmooth} holds, we obtain the following Lemma, which we will need for the proof of the main theorem:
\begin{lemma}\label{lemma:fromLsmooth}
If $f:\mathbb{R}^n\rightarrow \mathbb{R}$ is $L$-smooth then $\forall \theta,\varphi\in\mathbb{R}^n$:
\begin{align}
    |f(\varphi)-(f(\theta)+\nabla f(\theta)^T(\varphi-\theta))|\leq \frac{L}{2}\|\varphi-\theta\|^2_2.
\end{align}
\end{lemma}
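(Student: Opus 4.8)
The plan is to reduce this to a one-dimensional statement via the fundamental theorem of calculus and then control the resulting error term using the Lipschitz bound on $\nabla f$. First I would introduce the auxiliary function $g(t) = f(\theta + t(\varphi-\theta))$ for $t\in[0,1]$, which interpolates between $g(0)=f(\theta)$ and $g(1)=f(\varphi)$. Since $f$ is continuously differentiable, the chain rule gives $g'(t) = \nabla f(\theta + t(\varphi-\theta))^T(\varphi-\theta)$, and integrating from $0$ to $1$ yields the exact identity
\[
    f(\varphi) - f(\theta) = \int_0^1 \nabla f(\theta + t(\varphi-\theta))^T(\varphi-\theta)\,dt.
\]

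Next I would subtract the linear term $\nabla f(\theta)^T(\varphi-\theta)$, rewriting it as $\int_0^1 \nabla f(\theta)^T(\varphi-\theta)\,dt$ so the two integrals combine into
\[
    f(\varphi) - f(\theta) - \nabla f(\theta)^T(\varphi-\theta) = \int_0^1 \bigl(\nabla f(\theta + t(\varphi-\theta)) - \nabla f(\theta)\bigr)^T(\varphi-\theta)\,dt.
\]
Taking absolute values, applying the triangle inequality for integrals, and then using Cauchy--Schwarz pointwise in $t$ bounds the right-hand side above by $\int_0^1 \|\nabla f(\theta+t(\varphi-\theta)) - \nabla f(\theta)\|_2\,\|\varphi-\theta\|_2\,dt$.

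Finally I would invoke the $L$-smoothness hypothesis (Assumption \ref{Aassm:Lsmooth}) to estimate $\|\nabla f(\theta+t(\varphi-\theta)) - \nabla f(\theta)\|_2 \le L\,\|t(\varphi-\theta)\|_2 = L\,t\,\|\varphi-\theta\|_2$, which reduces the integral to $L\|\varphi-\theta\|_2^2\int_0^1 t\,dt = \tfrac{L}{2}\|\varphi-\theta\|_2^2$, exactly the claimed bound. I do not expect a genuine obstacle here, since this is the standard quadratic upper bound associated with a Lipschitz-continuous gradient; the only points requiring a little care are justifying the differentiation of $g$ (immediate from the continuous differentiability of $f$) and correctly applying Cauchy--Schwarz \emph{inside} the integral before performing the final elementary integration in $t$.
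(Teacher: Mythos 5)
Your proposal is correct and follows essentially the same route as the paper's proof: both start from the fundamental theorem of calculus applied along the segment from $\theta$ to $\varphi$, subtract the linear term to express the error as $\int_0^1 (\nabla f(\theta+t(\varphi-\theta))-\nabla f(\theta))^T(\varphi-\theta)\,\mathrm{d}t$, and then apply Cauchy--Schwarz inside the integral followed by the Lipschitz bound and the elementary integral $\int_0^1 t\,\mathrm{d}t = \tfrac{1}{2}$. No gaps.
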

\begin{proof}[Proof of Lemma \ref{lemma:fromLsmooth}.]
From the fundamental theorem of calculus:
\begin{align*}
    \int_0^1 \nabla f[\theta+t(\varphi-\theta)]^T (\varphi-\theta) \ \text{d}t = f(\varphi)-f(\theta)
\end{align*}
So using Cauchy-Schwartz and the assumption that $f$ is $L$-smooth we obtain:
\begin{align*}
    | f(\varphi)-f(\theta)-\nabla f(\theta)^T(\varphi-\theta) | &= \left | \int_0^1 \left ( \nabla f[\theta+t(\varphi-\theta)] -\nabla f(\theta)\right )^T (\varphi-\theta) \ \text{d}t \right | \\
    &\leq \int_0^1 \| \nabla f[\theta+t(\varphi-\theta)] -\nabla f(\theta)\|_2 \| \varphi-\theta \|_2 \text{d}t \\
    &\leq L \|\varphi-\theta\|^2_2\int^1_0 t \ \text{d}t = \frac{L}{2}\|\varphi-\theta\|^2_2.
\end{align*}
\end{proof}
As a starting point for our layer-wise multirate approach we partition the parameters as $\theta = \{\theta_F,\theta_S\}$, with $\theta_F
\in \mathbb{R}^{n_F}, \theta_S
\in \mathbb{R}^{n_S}$, $n = n_F +n_S$. The multirate method update for base algorithm SGD is 
\begin{align}
    \theta^{t+1}_{\ell} = \theta^t_{\ell}-h \nabla f_{\ell,x_i}(\theta^t),\label{eq:multirate}
\end{align}
where $\ell\in\{F,S\}$, $\theta^{t}_{\ell}$ are the parameter groups at iteration $t$, $h$ is the stepsize, and $\nabla f_{\ell,x_i}$ denotes the gradient of the loss of the $i$th training example for parameters 
$\theta^{t}_{\ell}$, 
where $\nabla f_{F,x_i}(\theta^t) = \nabla f_{F,x_i}(\theta^t)$ and with linear drift: for any $t \in [\tau,\tau+k-1]$, where $\tau$ is divisible by $k$, $\nabla f_{S,x_i}(\theta^t) = \nabla f_{S,x_i}(\theta^{\tau})$. 
The total number of iterations $T$ is always set to be a multiple of $k$.
\ \\ \ \\ In the following we  denote $\nabla f_{x_i}(\theta^t) = \{ \nabla f_{F,x_i}(\theta^t),\nabla f_{S,x_i}(\theta^t) \}$ and 
$g_{x_i}(\theta^t) = \{\nabla f_{F,x_i}(\theta^t),\nabla f_{S,x_i}(\theta^\tau)\}$, such that the parameter update rule becomes
\begin{align}
    \theta^{t+1} = \theta^t - hg_{x_i}(\theta^t).
\end{align}

Now we want to prove Theorem \ref{thm:conv} in the main body of the paper:
\begin{theorem}\label{th:mainconv}
Assume that Assumptions \ref{Aassm:Lsmooth} and \ref{Aassm:boundedvariance} hold. Then
\begin{align}
     \frac{1}{T}\sum^{T-1}_{t=0} \mathbb{E}\left [ \|\nabla f(\theta^t)\|^2_2 \right ] \leq \frac{ 2(f(\theta^0)-f(\theta^*))}{hT}+ h L M\ell \left ( \frac{1}{3} h L k^2 + 1 \right ),
\end{align}
where $\theta^*$ is the optimal solution to $f(\theta)$. 
\end{theorem}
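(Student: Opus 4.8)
The plan is to run the standard non-convex SGD descent argument of \citet{GDwithmom}-style analyses, but to carefully track the bias introduced by the stale slow gradient. First I would apply the descent inequality of Lemma~\ref{lemma:fromLsmooth} to consecutive iterates and substitute the update $\theta^{t+1}-\theta^t = -h\,g_{x_i}(\theta^t)$ to get
\[
f(\theta^{t+1}) \leq f(\theta^t) - h\,\nabla f(\theta^t)^T g_{x_i}(\theta^t) + \tfrac{Lh^2}{2}\|g_{x_i}(\theta^t)\|^2_2 .
\]
Taking the conditional expectation over the minibatch (with the history, and in particular the synchronization iterate $\theta^{\tau}$, held fixed) and using unbiasedness of the minibatch gradients, the search direction in expectation is $\mathbb{E}[g_{x_i}(\theta^t)] = \{\nabla f_F(\theta^t),\nabla f_S(\theta^{\tau})\}$: it agrees with $\nabla f(\theta^t)$ on the fast block but is evaluated at the stale point $\theta^{\tau}$ on the slow block.

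The crux is the inner-product term. Writing $b=\nabla f_S(\theta^t)$ and $c=\nabla f_S(\theta^{\tau})$, I would use the identity $\nabla f(\theta^t)^T\mathbb{E}[g_{x_i}(\theta^t)] = \|\nabla f(\theta^t)\|^2_2 + b^T(c-b)$ to isolate the staleness error, and then bound the cross term by Young's inequality, $b^T(b-c)\leq \tfrac12\|b\|^2_2 + \tfrac12\|b-c\|^2_2$. The key observation is that $\|b\|_2\leq \|\nabla f(\theta^t)\|_2$, since the slow block is a subvector of the full gradient, so the resulting $\tfrac{h}{2}\|b\|^2_2$ contribution can be absorbed back into the left-hand side; this absorption is exactly what produces the factor $2$ in the leading term $\tfrac{2(f(\theta^0)-f(\theta^*))}{hT}$ after rearranging.

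Next I would control the remaining staleness term $\|b-c\|^2_2$. By Assumption~\ref{Aassm:Lsmooth} and the same subvector observation, $\|b-c\|_2 \leq L\|\theta^t-\theta^{\tau}\|_2$, and since at most $k-1$ steps separate $t$ from its synchronization point $\tau$, the triangle inequality together with the uniform bound $\|g_{x_i}(\theta^s)\|^2_2 \leq \ell M$ (each of the $\ell$ parameter groups contributing at most $M$ by Assumption~\ref{Aassm:boundedvariance}, as the groups are evaluated at different iterates and so cannot be recombined into a single full-gradient bound) yields $\|\theta^t-\theta^{\tau}\|^2_2 \leq h^2(t-\tau)^2\ell M$. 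The same uniform bound also disposes of the $\tfrac{Lh^2}{2}\|g_{x_i}(\theta^t)\|^2_2$ term, which will give the $+1$ (that is, $hLM\ell$) contribution.

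Finally I would sum the descent inequality from $t=0$ to $T-1$, lower-bound $f(\theta^T)$ by $f(\theta^*)$, and multiply through by $2/(hT)$. The only nontrivial bookkeeping is the sum of staleness errors: grouping the iterations into the $T/k$ blocks of length $k$, the offsets $t-\tau$ range over $0,1,\dots,k-1$ within each block, so $\sum_{t=0}^{T-1}(t-\tau)^2 = \tfrac{T}{k}\sum_{j=0}^{k-1} j^2 \leq \tfrac{Tk^2}{3}$, which produces precisely the $\tfrac13 h^2 L^2 M \ell k^2$ term. I expect the main obstacle to be the careful handling of the biased (stale) slow gradient — in particular arranging the Young's-inequality split and the subvector bound $\|\nabla f_S\|_2 \leq \|\nabla f\|_2$ so that the $\|\nabla f\|^2_2$ mass can be absorbed — rather than the routine smoothness and summation estimates.
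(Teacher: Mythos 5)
Your proposal follows the paper's own proof essentially step for step: the same $L$-smooth descent inequality, the same decomposition of the expected inner product into $\|\nabla f(\theta^t)\|_2^2$ plus the staleness cross term $\nabla f_S(\theta^t)^T(\nabla f_S(\theta^\tau)-\nabla f_S(\theta^t))$, the same Young's-inequality split with absorption via $\|\nabla f_S(\theta^t)\|_2\le\|\nabla f(\theta^t)\|_2$ producing the factor $2$, the same $L$-smoothness-plus-drift bound $\|\theta^t-\theta^\tau\|_2^2\le h^2(t-\tau)^2 M\ell$, and the same blockwise summation $\sum_j j^2\le k^3/3$ yielding the $\tfrac13 h^2L^2Mk^2\ell$ term. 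The argument is correct and complete as outlined.
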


\begin{proof}[Proof of Theorem \ref{th:mainconv}]
Because $f$ is $L$-smooth, from Lemma \ref{lemma:fromLsmooth} it follows that:
\begin{align}
    f(\theta^{t+1})&\leq f(\theta^t)+\nabla f(\theta^t)\cdot(\theta^{t+1}-\theta^t)+ \frac{L}{2}\|\theta^{t+1}-\theta^t\|^2_2 \nonumber\\
    &\leq f(\theta^t)-h\nabla f(\theta^t) \cdot g_{x_i}(\theta^t)+ \frac{h^2 L}{2}\left \|g_{x_i}(\theta^t)\right \|^2_2 
\end{align}
Taking the double expectation on both sides gives (because of unbiased gradient $\mathbb{E}_{x_i\sim p(X)}[g_{x_i}(\theta^t)] = g(\theta^t)$ and Assumption \ref{Aassm:boundedvariance}):
\begin{align*}
\mathbb{E}[f(\theta^{t+1})-f(\theta^t)] &\leq -h\mathbb{E}\left [\nabla f(\theta^t) \cdot g(\theta^t)\right ]+ \frac{h^2 L M \ell}{2} 
\end{align*}
for number of parameter groups $\ell$ and where $\mathbb{E}[..]$ is the expectation with respect to the parameters. So in $T$ iterations
we have $\theta^T$ such that (using a telescoping sum):
\begin{align}
        f(\theta^*)-f(\theta^0) &\leq \mathbb{E}[f(\theta^T)]-f(\theta^0) \nonumber \\
        &\leq -h \underset{\mathcal{A}}{\underbrace{\sum^{T-1}_{t=0}\mathbb{E}\left [\nabla f(\theta^t) \cdot g(\theta^t)\right ]}}+ 
        \frac{h^2 L M \ell}{2}T. \label{eq:multiratesumoverT} 
\end{align}
For term $\mathcal{A}$ we get 
\begin{align}
    \mathcal{A} = \sum_{t=0}^{T-1} a_t = \sum_{t=0}^{k-1}a_t+\sum_{t=k}^{2k-1}a_t+\dots+\sum^{\tau+k-1}_{t=\tau} a_t+\dots+\sum_{t=T-k}^{T-1}a_t,
\end{align}
where $\sum^{\tau+k-1}_{t=\tau} a_t$ is given by
\begin{align*}
    \sum^{\tau+k-1}_{t=\tau}\mathbb{E}\left [\nabla f(\theta^t)
   \cdot g(\theta^t)\right ] &= \sum^{\tau+k-1}_{t=\tau}\mathbb{E}\left [\{\nabla f_F(\theta^t), \nabla f_S(\theta^t)\} \cdot \{\nabla f_F(\theta^t), \nabla f_S(\theta^\tau)\}\right ] \\
           &= \sum^{\tau+k-1}_{t=\tau} \mathbb{E}\left [\|\nabla f_{F}(\theta^t)\|^2_2 \right ]+ \sum^{\tau+k-1}_{t=\tau}\mathbb{E}[\nabla f_{S}(\theta^t) \cdot (\nabla f_{S}(\theta^{\tau})-\nabla f_{S}(\theta^t)+\nabla f_{S}(\theta^t))]\\
        &= \sum^{\tau+k-1}_{t=\tau} \mathbb{E}\left [\|\nabla f(\theta^t)\|^2_2 \right ]+ \underset{\mathcal{B}}{\underbrace{\sum^{\tau+k-1}_{t=\tau}\mathbb{E}[\nabla f_{S}(\theta^t)\cdot (\nabla f_{S}(\theta^{\tau})-\nabla f_{S}(\theta^t))]}}.
\end{align*}
Because $xy \leq \frac{1}{2}\|x\|^2_2+\frac{1}{2}\|y\|^2_2$ (combination of Cauchy-Schwarz and Young's inequality) (gives 1st inequality) and Assumption \ref{Aassm:Lsmooth} (gives 2nd inequality) we get for term $\mathcal{B}$:
\begin{align*}
    \mathcal{B} &\leq
    \frac{1}{2} \sum^{\tau+k-1}_{t=\tau}\mathbb{E}\left [\|\nabla f_{S}(\theta^t)\|^2_2\right ] +\frac{1}{2}\sum^{\tau+k-1}_{t=\tau}\mathbb{E} \left [\|\nabla f_{S}(\theta^{\tau})-\nabla f_{S}(\theta^t)\|^2_2 \right ]\\
          &\leq \frac{1}{2}\sum^{\tau+k-1}_{t=\tau} \mathbb{E} \left [ \|\nabla f_{S}(\theta^t)\|^2_2 \right ] +\frac{L^2}{2}\mathbb{E}\Bigg[\underset{\mathcal{C}}{\underbrace{\sum^{\tau+k-1}_{t=\tau+1} \|\theta^{\tau}-\theta^t\|^2_2}}\Bigg].
\end{align*}
We get for term $\mathcal{C}$ from Eq. \eqref{eq:multirate} (gives 2nd equality), $\|a_1 + \dots + a_m\|^2_2 \leq m (\|a_1\|^2_2 + \dots + \|a_m\|^2_2)$ 
(gives 1st inequality),
Assumption \ref{Aassm:boundedvariance} (gives 2nd inequality), and $k > 1$ (final inequality):
\begin{align*}
   \mathcal{C} &=  \|\theta^{\tau}-\theta^{\tau+1}\|^2_2 + \|\theta^{\tau}-\theta^{\tau+2}\|^2_2 + \dots +  \|\theta^{\tau}-\theta^{\tau+k-1}\|^2_2 \\
     &= h^2 \left( \left \|
    g_{x_i}(\theta^{\tau})\right \|^2_2 + \left \| g_{x_i}(\theta^{\tau}) +  g_{x_i}(\theta^{\tau+1})\right \|^2_2 
     + \dots +  \left \| g_{x_i}(\theta^{\tau}) + \dots +  g_{x_i}(\theta^{\tau+k-2}) \right \|^2_2  \right ) \\
     &\leq h^2 \left ( \sum_{m=1}^{k-1} m \left \| g_{x_i}(\theta^{\tau}) \right \|^2_2 + \sum_{m=2}^{k-1} m \left \|\ g_{x_i}(\theta^{\tau+1})  \right \|^2_2 
   + \dots + (k-1) \left \| g_{x_i}(\theta^{\tau+k-2})  \right \|^2_2 \right) \\
  &\leq h^2 M \ell \left ((k-1)^2+(k-2)^2+\dots+1 \right ) = h^2 M \ell  \sum_{m=1}^{k-1} m ^2 = h^2 M \ell \left ( k/6 - k^2/2 + k^3/3 \right ) \leq h^2 M \ell k^3/3.
\end{align*}
So overall for term $-h \mathcal{A}$ we get
\begin{align}
    -h \sum^{T-1}_{t=0}\mathbb{E}[\nabla f(\theta^t)\cdot g(\theta^t)]
&\leq -h  \sum^{T-1}_{t=0}\mathbb{E}\left [\|\nabla f(\theta^t)\|^2_2\right] +  h \left | \sum_{\tau} \mathcal{B} \right |   \nonumber \\
    &\leq -\frac{h}{2} \sum^{T-1}_{t=0} \mathbb{E}\left [\|\nabla f(\theta^t)\|^2_2\right] + \frac{1}{6} h^3 L^2 M \ell k^2 T.
\end{align}
Substituting this into Eq. \eqref{eq:multiratesumoverT} gives:
\begin{align}
            f(\theta^*)-f(\theta^0) &\leq \mathbb{E}[f(\theta^T)]-f(\theta^0) \nonumber \\
        &\leq - \frac{h}{2} \sum^{T-1}_{t=0} \mathbb{E}\left [ \|\nabla f(\theta^t)\|^2_2 \right ] + \frac{1}{6} h^3 L^2 M \ell k^2 T+ 
        \frac{h^2 L M \ell}{2}T \nonumber \\
        &= - \frac{h}{2} \sum^{T-1}_{t=0} \mathbb{E}\left [ \|\nabla f(\theta^t)\|^2_2 \right ] + \frac{1}{2}h^2L M\ell T \left ( \frac{1}{3} h L k^2 + 1 \right ).
\end{align}
This gives Theorem \ref{th:mainconv}
\begin{align}
    \frac{1}{T}\sum^{T-1}_{t=0} \mathbb{E}\left [ \|\nabla f(\theta^t)\|^2_2 \right ] \leq \frac{ 2(f(\theta^0)-f(\theta^*))}{hT}+ h L M\ell \left ( \frac{1}{3} h L k^2 + 1 \right ). \nonumber
\end{align}
\end{proof}
For comparison, the convergence analysis for vanilla SGD with fixed stepsize $h$ update
\begin{align}
    \theta^{t+1} = \theta^t-h \nabla f_{x_i}(\theta^t),
\end{align}
where $\nabla f_{x_i}$ denotes the gradient of the loss of the $i$th training example, gives Theorem \ref{th:sgdconv}.

\begin{theorem}\label{th:sgdconv}
Assume that Assumptions \ref{Aassm:Lsmooth} and \ref{Aassm:boundedvariance} hold. Then:
\begin{align}
  \frac{1}{T} \sum^{T-1}_{t=0} \mathbb{E}\left [ \|\nabla f(\theta^t)\|^2_2 \right ] \leq \frac{2(f(\theta^0)-f(\theta^*))}{hT} + \frac{hLM}{2}
\end{align}
where $\theta^*$ is the optimal solution to $f(\theta)$.
\end{theorem}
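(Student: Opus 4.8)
The plan is to reuse the template of the multirate proof of Theorem \ref{th:mainconv}, specialized to the case where the update direction is the full stochastic gradient at every step. Because vanilla SGD carries no linear drift, the slow-block discrepancy that forced the introduction of terms $\mathcal{B}$ and $\mathcal{C}$ is entirely absent: $g_{x_i}(\theta^t)$ is simply replaced by $\nabla f_{x_i}(\theta^t)$, which is an unbiased estimator of the full gradient $\nabla f(\theta^t)$. This collapses the analysis to a single clean descent inequality, and the extra $h^3 L^2 M \ell k^2$ contribution never appears.

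First I would invoke the descent lemma from $L$-smoothness, Lemma \ref{lemma:fromLsmooth}, at consecutive iterates to obtain
\begin{align*}
f(\theta^{t+1}) \leq f(\theta^t) + \nabla f(\theta^t) \cdot (\theta^{t+1} - \theta^t) + \frac{L}{2}\|\theta^{t+1} - \theta^t\|^2_2.
\end{align*}
Substituting the SGD update $\theta^{t+1} = \theta^t - h\nabla f_{x_i}(\theta^t)$ and taking the expectation over the sample $x_i$, unbiasedness $\mathbb{E}[\nabla f_{x_i}(\theta^t)] = \nabla f(\theta^t)$ turns the inner-product term into $-h\|\nabla f(\theta^t)\|^2_2$, while Assumption \ref{Aassm:boundedvariance} bounds the quadratic term by $M$. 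This yields the one-step bound
\begin{align*}
\mathbb{E}[f(\theta^{t+1}) - f(\theta^t)] \leq -h\,\mathbb{E}\left[\|\nabla f(\theta^t)\|^2_2\right] + \frac{h^2 L M}{2}.
\end{align*}

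Next I would sum this inequality over $t = 0, \dots, T-1$ as a telescoping sum, exactly as in Eq. \eqref{eq:multiratesumoverT}, and bound the left-hand side below using $f(\theta^*) \leq \mathbb{E}[f(\theta^T)]$, since $\theta^*$ minimizes $f$. Rearranging the resulting inequality and dividing through by $hT$ isolates $\frac{1}{T}\sum_{t}\mathbb{E}[\|\nabla f(\theta^t)\|^2_2]$ on the left and produces the claimed bound with rightmost term $\frac{hLM}{2}$.

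There is essentially no hard step here — this is the classical non-convex SGD estimate, and the bulk of the bookkeeping already lives inside the proof of Theorem \ref{th:mainconv}. The only point deserving care is the leading constant: bounding the second moment directly by $M$ gives coefficient $1$ (not $\tfrac{1}{2}$) on the gradient term, so to match the stated form with first term $\frac{2(f(\theta^0)-f(\theta^*))}{hT}$ one either reads the factor of two as a deliberate loosening for parallelism with Theorem \ref{th:mainconv}, or imposes the mild stepsize restriction $h \leq 1/L$ and splits $\mathbb{E}\|\nabla f_{x_i}\|^2$ into $\|\nabla f\|^2$ plus its variance to extract the $-\tfrac{h}{2}\|\nabla f\|^2$ coefficient explicitly.
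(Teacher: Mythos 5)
Your proposal is correct and follows essentially the same route as the paper's proof: the descent inequality from $L$-smoothness (Lemma \ref{lemma:fromLsmooth}), substitution of the SGD update, expectation using unbiasedness and Assumption \ref{Aassm:boundedvariance}, a telescoping sum over $T$ steps, and rearrangement. Your remark about the leading constant is also accurate --- the paper's own derivation yields $(f(\theta^0)-f(\theta^*))/(hT)$ and the stated factor of $2$ is simply a valid loosening, kept for parallelism with Theorem \ref{th:mainconv}.
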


\begin{proof}[Proof of Theorem \ref{th:sgdconv}.]
Because $f$ is $L$-smooth, from Lemma 1 it follows that:
\begin{align}
    f(\theta^{t+1})-f(\theta^t) &\leq \nabla f(\theta^t)^{Tr}(\theta^{t+1}-\theta^t)+ \frac{L}{2}\|\theta^{t+1}-\theta^t\|^2_2 \nonumber\\
    &\leq-h \nabla f(\theta^t)^{Tr}\nabla f_{x_i}(\theta^t)+ \frac{h^2 L}{2}\|\nabla f_{x_i}(\theta^t)\|^2_2\nonumber
\end{align}
Taking the expectation on both sides gives (because of assumption 2 and unbiased gradient $\mathbb{E}[\nabla f_{x_i}(\theta)] = \nabla f(\theta)$):
\begin{align}
\mathbb{E}[f(\theta^{t+1})-f(\theta^t)] &\leq -h \mathbb{E}\left [ \|\nabla f(\theta^t)\|^2_2 \right ]+ \frac{h^2 L}{2}M
\end{align}
So in $T$ gradient steps we have $\theta^T$ such that:
\begin{align}
        f(\theta^*)-f(\theta^0) \leq \mathbb{E}[f(\theta^T)]-f(\theta^0) \leq - h \sum^{T-1}_{t=0}\mathbb{E}\left [\|\nabla  f(\theta^t)\|^2_2 \right ]+ \frac{h^2 L M T}{2} 
\end{align}
This gives:
\begin{align}
  \frac{1}{T} \sum^{T-1}_{t=0} \mathbb{E}\left [ \|\nabla f(\theta^t)\|^2_2 \right ] \leq \frac{2(f(\theta^0)-f(\theta^*))}{hT} + \frac{hLM}{2}
\end{align}
\end{proof}

\section{Further Experimental Details}
We run our experiments (unless indicated otherwise) with SGD with momentum set to 0.9.
The learning rate varied per experiment and is detailed in the captions of the figures. For the transfer learning experiments (Section 4) it was set to $h = 0.001$ for the ResNet architectures and to $h $ = 1e-4 for the DistilBERT and we did not use weight decay (except for Figure A\ref{ResNet-34_rescaled_WD}). The models were pre-trained on ImageNet, so we resized the CIFAR-10/CIFAR-100 images before training, e.g. as in \citet{revisitresnet}. In Algorithm 1 we set $k = 5$ and varied our partitionings of the network parameters into fast and slow parts. 
All our experiments were run in PyTorch using NVIDIA GPUs. We will discuss specific experiments that require further details below. 

\makeatletter
\renewcommand{\fnum@figure}{\figurename~A\thefigure}
\makeatother

\makeatletter
\renewcommand{\fnum@table}{\tablename~A\thetable}
\makeatother

\subsection{Patch-augmented CIFAR-10}\label{appx:patch}
The patch-augmented CIFAR-10 dataset that we used for Figure \ref{fig:patches} was adapted from the paper by \citet{Li2019}. 
To generate the dataset the 50000 CIFAR-10 training images are split into 10000 patch-free images and 40000 images which contain only a patch with probability 0.2 and contain a patch mixed with CIFAR-10 data with probability 0.8. The $7\times 7$ pixel patch is located in the center of the images. Following \citet{Li2019} to generate the patch, sample $z\sim\mathcal{N}(0,1.5625), a$ a random float in [0,1), 
and $\zeta_i \sim [-0.1,0.1]$ for classes $i = 1,\dots,10$. Then for patch-only images belonging to class $i$ set everything to 0 and add $z\pm1.75a\zeta_i$. 
To generate images containing both a patch and CIFAR-10 data add $z\pm \zeta_i$. For the multirate training approach we partitioned a composite network system into two parts, where each subnetwork was trained on a different timescale. The weights sampled from both parts were averaged and merged every $k$ steps. The exact same learning rates were used as in the original paper by \citet{Li2019}, so $h_F = 0.004$, $h_S = 0.1$, and thus $k = h_S/h_F = 25$.

\section{Additional Experiments}

\subsection{GLUE Tasks}\label{appx:glue}
In Table A\ref{tab:glue} we provide results for fine-tuning a DistilBERT  (same setting as in Section \ref{sec:transferlearning}, Figure \ref{distilbert}) on more tasks from the General Language Understanding Evaluation (GLUE) benchmark \cite{glue}. We compare the performance and computational speed-up of multirate Algorithm \ref{multirateSGDwithmom} with full net fine-tuning. We see that a similar generalization performance is maintained using the multirate approach, while achieving computational speed-up. The focus of the experiment is solely on showing relative computational speed-up, not on beating state-of-the-art.
For these experiments (and in the rest of this paper) we use as base algorithm SGD with momentum. 
However, adaptive optimizers such as Adam tend to be the method of choice in the natural language processing literature \cite{BERT,roberta,AdamvsSGD} and may lead to further performance enhancements. We see exploration of a multirate approach in this setting as an interesting direction for future work. 

\begin{table}[h]
\caption{Performance on dev sets (median over 5 runs) and averaged wallclock time per fine-tuning training run of a 
pre-trained DistilBERT on 
some GLUE tasks. For MNLI we report accuracy on  matched and mismatched sets. Hyperparameter settings: batchsize = 16 (all, except 32 for MNLI), 
$k = 2$ and weight decay set to 5e-4 (MNLI, QNLI), $k = 4$ (RTE, WNLI),
$h/k$ = 3e-3 (MNLI), $h/k$ = 5e-4 (QNLI, RTE), $h/k$ = 1e-4 (SST-2, WNLI) 
and the fast parameters being the linear head and attention block 5 (MNLI, SST-2, WNLI) + attention block 4 (QNLI, RTE).\vspace*{0.2cm}}
    \centering
    \begin{tabular}{ll||c|c|c|c|c}
    & & MNLI & QNLI & RTE & SST-2 & WNLI  \\ \hline
  Accuracy (\textit{\%}) & Full net fine-tuning & 75.3 / 76.7 & 86.6 & 57.8 & 89.7 & 54.2 \\
              & Multirate  & 75.4 / 76.7 & 86.4 & 57.8 & 89.7 & 54.9 \\ \hline 
    Timing (\textit{sec.}) & Full net fine-tuning & 14617 & 14148 & 192 & 380 & 20\\
    & Multirate  & 10797 & 10602 & 127 & 224 & 11
    \end{tabular}
    \label{tab:glue}
\end{table}

\subsection{Slow Biases}\label{sec:slowbias}
We study the effect of putting all the biases of a neural network on the slow time scale, while keeping the weights on the fast time scale and only updating the slow parameters every $k$ steps using Algorithm \ref{multirateSGDwithmom}. Surprisingly, this gives big performance improvements on 4-turn spiral data (adapted from \citet{LMV}) as shown in Figure A\ref{biasslow_spiral}. Figure A\ref{fig:comparebiasslow_spiral} confirms that this enhanced performance is caused by our multirate technique, i.e., freezing the biases for $k$ steps and then boosting them with a larger time-step. Simply putting the biases on a different time scale or freezing the biases for $k$ steps and using the same time-step does not lead to the same performance improvement. In Figure A\ref{fig:whichbias} we show that this effect is caused by the input layer biases, in particular. This seems to suggest a possible connection with data normalization (and the lack thereof for the spiral dataset).  In Figure A\ref{biasslow_resnet} we show that for a ResNet-34 on CIFAR-10 data one also obtains a small performance improvement by using slow biases for the fully connected layer, especially when no data augmentation is used. 

\begin{figure}[h]
    \centering
    \includegraphics[width=0.9\linewidth]{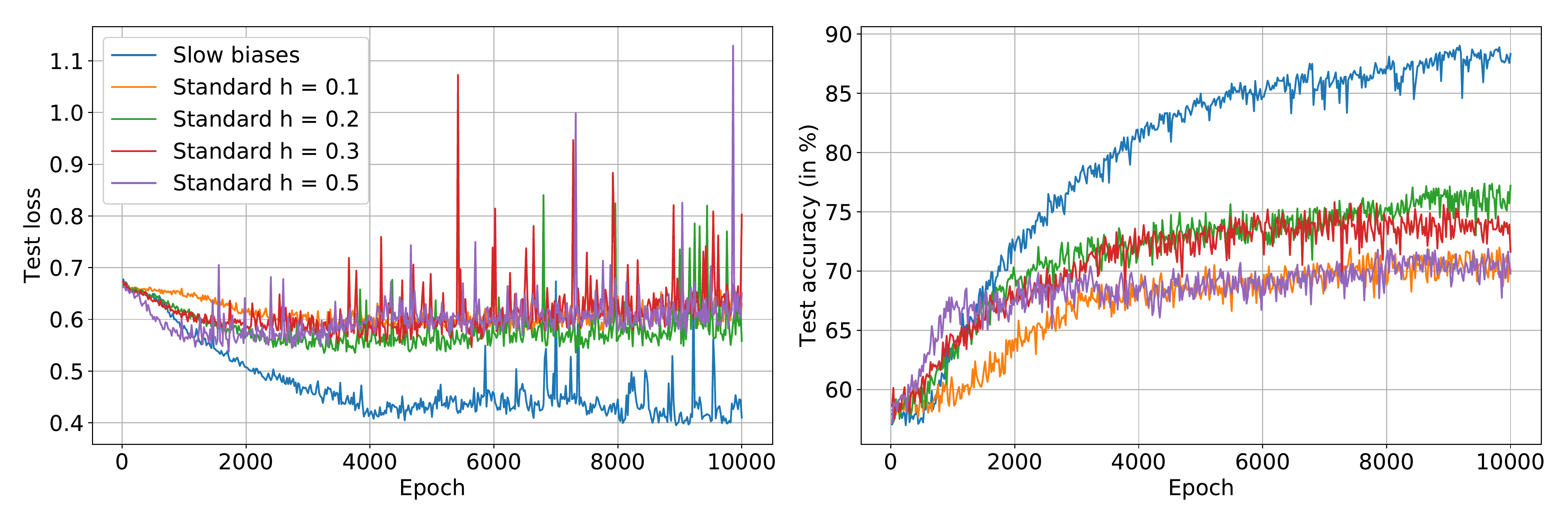}
    \caption{We use both standard SGD and our multirate approach as defined in Algorithm \ref{multirateSGDwithmom} to train a single hidden layer perceptron (SHLP) on a 4-turn spiral problem (adapted from \cite{LMV}) with 5\% subsampling. We set the biases of the neural network to be $\theta_S$ and the weights to be $\theta_F$ and set $k = 5$, $h=1$ in Algorithm \ref{multirateSGDwithmom}. Results are averaged over 5 runs.}
    \label{biasslow_spiral}
\end{figure}

\begin{figure}[h]
    \centering
    \includegraphics[width=0.9\linewidth]{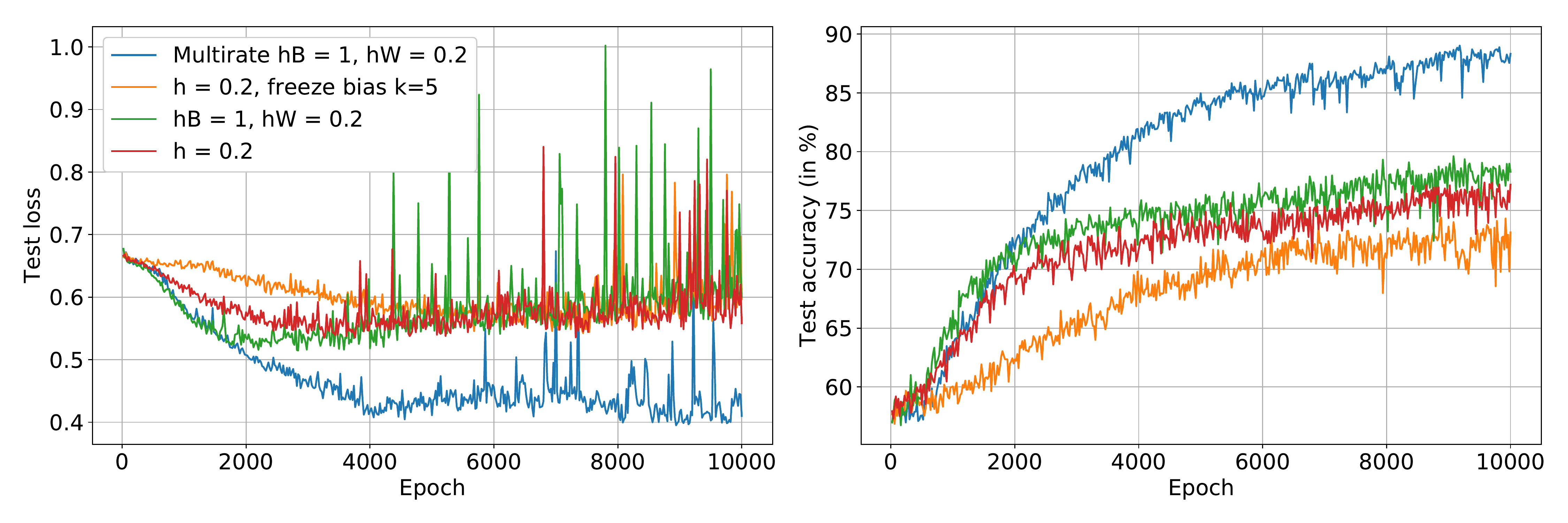}
    \caption{Same setting as in Figure A\ref{biasslow_spiral}, with a SHLP being trained on spiral data using SGD with $h = 0.2$ (red). We show that putting the biases on a different time scale (green) or freezing the biases for $k = 5$ steps and then updating them with the same stepsize as for the fast (weight) parameters (orange) both do not lead to the same performance improvement as our multirate technique (blue). }
    \label{fig:comparebiasslow_spiral}
\end{figure}
\begin{figure}[h]
    \centering
    \includegraphics[width=0.9\linewidth]{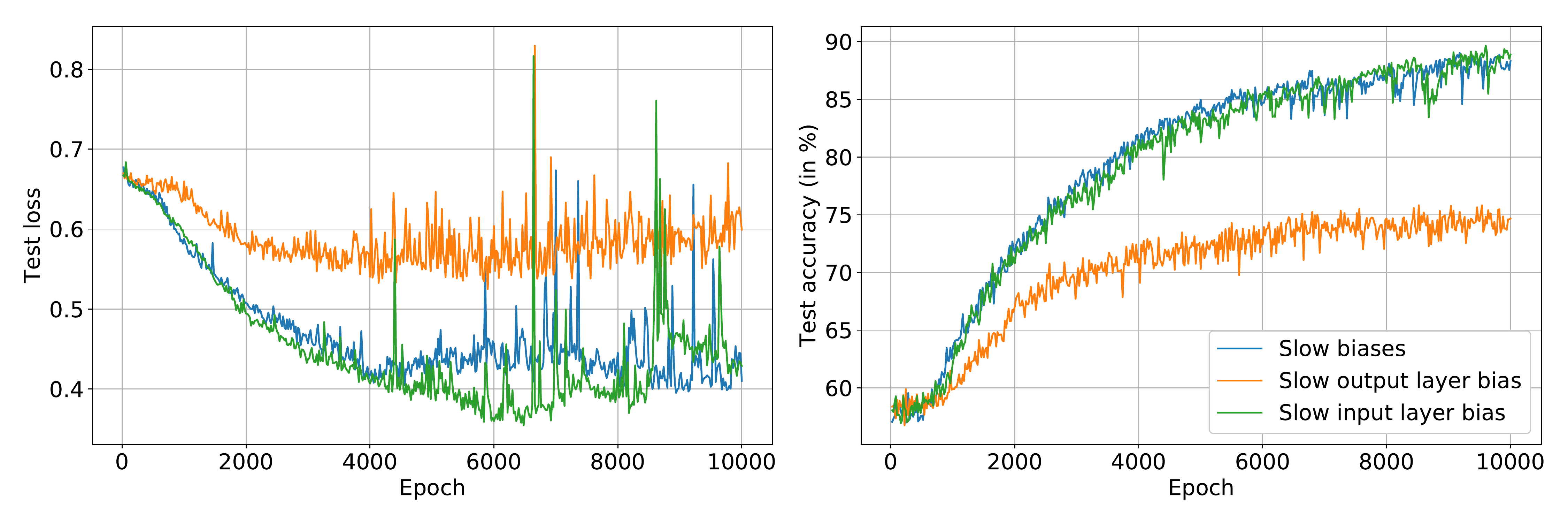}
    \caption{Same setting as in Figure A\ref{biasslow_spiral}, but here we study the effect of only putting the input biases (green) or only the output biases (orange) on the slow time scale. Clearly, using slow input biases (green) appears to be key to the enhanced generalization performance of the multirate approach (blue) in this setting.}
    \label{fig:whichbias}
\end{figure}

\begin{figure}[h]
    \centering
    \includegraphics[width=0.7\linewidth]{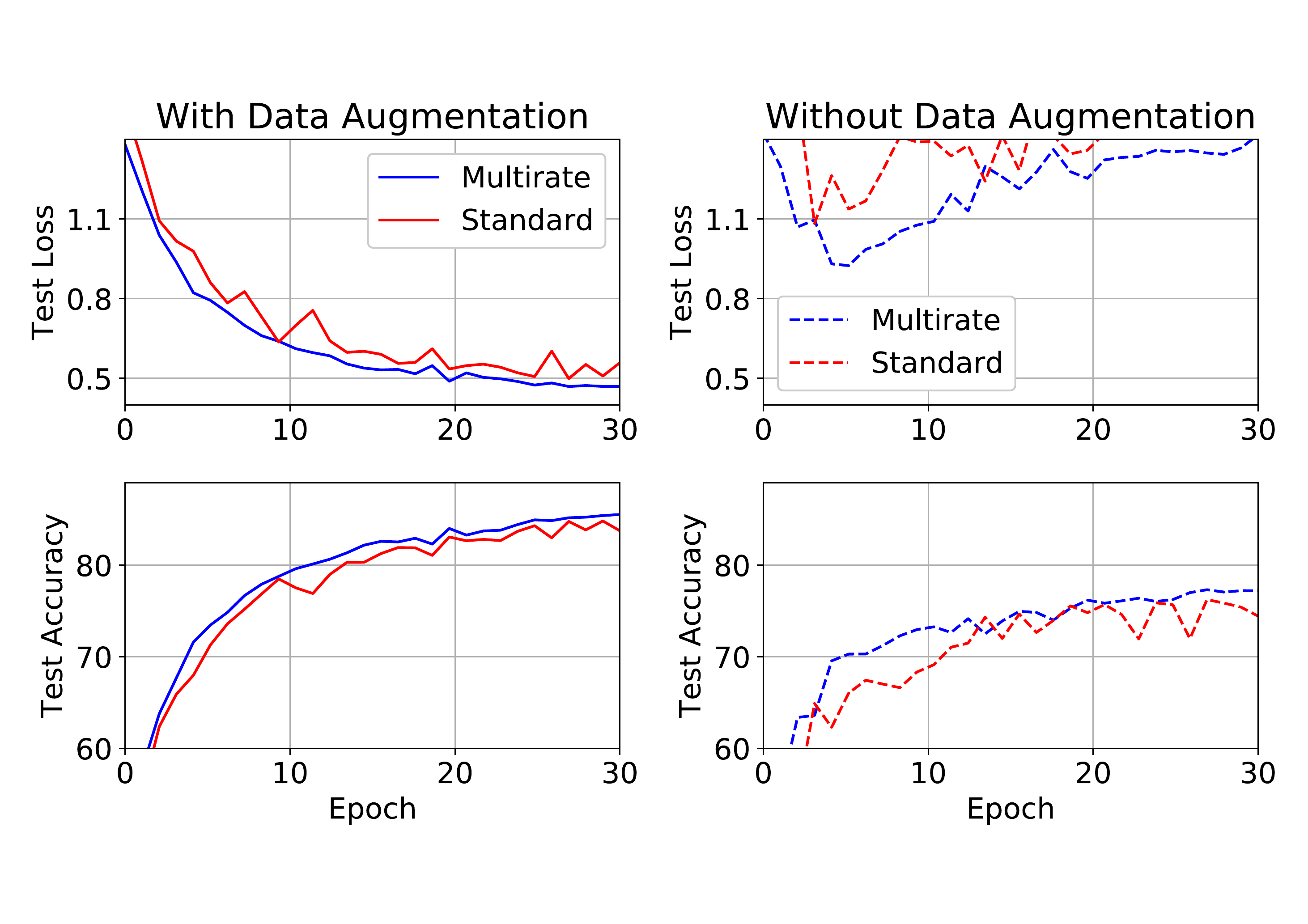}
    \vspace*{-8mm}
    \caption{We use both standard SGD (with $h = 0.1$) and our multirate approach as defined in Algorithm \ref{multirateSGDwithmom} to train a ResNet-34 architecture on CIFAR-10 data. We set the biases of the fully connected layer of the neural network to be $\theta_S$ and the weights to be $\theta_F$ and set $k = 10$, $h=1$ in Algorithm \ref{multirateSGDwithmom} and use batchsize = 128. Results are averaged over 10 runs.}
    \label{biasslow_resnet}
\end{figure}

\newpage
\section{Further Ablation Studies}\label{appx:ablation}

The effect of $k$ is studied in Table A\ref{tab:distilbert_varyk} for fine-tuning a pre-trained DistilBERT on SST-2 data using Algorithm 1. We find that although smaller values of $k$ can improve the generalization performance, the training time gets increased. This trade-off needs to be taken into account when choosing $k$. Apart from this, recall that lower values of $k$ also lower the stepsize used for the slow parameters, which can affect performance. Therefore, it may be beneficial to consider uncoupled learning rates, see discussion below and in Section \ref{subsec:multirate}. As a rule of thumb, we found that setting $k = 5$ often gives enough speed-up, without significantly affecting the accuracy.

We also provide ablation studies for the value of $k$ for the multirate approach for neural network regularization used in Section \ref{sec:randomsubgroups} for training from scratch a small MLP on MNIST data (Table A\ref{tab:mnist_varyk}, left) and a transformer on the Penn Treebank dataset (Table A\ref{tab:penntreebank_varyk}, left). In this setting the aim is not computational speed-up, but enhanced generalization performance. We again find that setting $k = 5$ gives optimal performance. In Table A\ref{tab:mnist_varyk} (right) and Table A\ref{tab:penntreebank_varyk} (right) we also show that using an uncoupled $h_S$ could lead to further performance enhancements, but this does introduce an additional hyperparameter to tune.

\begin{table}[h]
    \centering
        \caption{\textbf{Effect of $k$.} A pre-trained DistilBERT being trained on SST-2 data using our multirate approach for different values of $k$ (same setting as in Figure 7), where the fast parameters are set to be the fully connected (fc) layer + optionally the final attention block (layer 5). We set $h/k$ = 1e-4 and $\mu = 0.9$ in Algorithm 1 and use a batchsize of 16. 
    Results are presented over 10 runs. }
    \label{tab:distilbert_varyk}
    \begin{tabular}{c||c|c|c|c|c}
            Fast params  & $k$ & Mean test acc & Min test acc & Max test acc  & Avg Time (s) \\ \hline
Layer fc & $k = 3$ & 89.26\% & 88.69\% & 90.01\% & 245 \\
& $k = 5$  & \textbf{89.43\%} & 87.92\% & 90.28\% & 198 \\
& $k = 10$ & 88.53\% & 84.79\% & 89.79\% & 180 \\ \hline
Layer 5 + fc& $k = 3$ & 88.91\% & 87.42\% & 90.06\% & 264 \\
&  $k = 5$   & \textbf{89.70\%} & 89.35\% & 90.23\% & 224 \\
& $k = 10$ & 88.65\% & 87.97\% & 89.73\% & 207\\
    \end{tabular} 
\end{table}

\begin{table}[h]
    \centering
        \caption{ A single hidden layer perceptron trained on MNIST data using our multirate approach for neural network regularization (Section \ref{sec:randomsubgroups}) with $h_F = 0.1$. Left: different values of $k$ with coupled $h_S = k h_F$. Right: $k = 5$ and uncoupled $h_S$. Weights in the input and hidden layer are de-activated with a probability of 0.8 and 0.5, respectively (same setting as in Figure \ref{fig:randomsubset}).
    Results are presented over 10 runs. }
    \label{tab:mnist_varyk}
            \begin{tabular}{c||c|c|c}
            $k$ & Mean test acc & Min test acc & Max test acc \\ \hline
$k = 3$ & 76.16\% & 61.31\% & 89.41\% \\ 
$k = 5$ & \textbf{98.30}\% & 98.17\% & 98.44\% \\
$k = 10$ & 98.22\% & 98.11\% & 98.29\%
    \end{tabular} \quad
    \begin{tabular}{c||c|c|c}
            $h_S$ & Mean test acc & Min test acc & Max test acc \\ \hline
$h_S = 0.2$ & 98.26\% & 98.14\% & 98.37\% \\ 
$h_S = 0.5$ & 98.30\% & 98.17\% & 98.44\% \\
$h_S = 0.8$ & \textbf{98.31}\% & 98.21\% & 98.44\% \\
$h_S = 1$\ \  & 98.29\% & 98.19\% & 98.44\% \\
    \end{tabular} 
\end{table}

\begin{table}[h]
    \centering
        \caption{
        A transformer trained on Penn Treebank data 
        (same setting as in Figure \ref{fig:randomsubsetpenntreebank}) using our multirate approach for neural network regularization (Section \ref{sec:randomsubgroups}) with $h_F = 0.1$. Left: different values of $k$ with coupled $h_S = k h_F$. Right: $k = 5$ and uncoupled $h_S$. 
    Results are presented over 10 runs. }
    \label{tab:penntreebank_varyk}
    \begin{tabular}{c|c}
             $k$ & Minimum Validation Loss 
             \\ \hline
$k = 3$ & 4.870 $\pm$0.297 \\
$k = 5$  & \textbf{4.825} $\pm$0.302  \\
$k = 10$ & 4.838 $\pm$0.303\\ 
    \end{tabular} \qquad
          \begin{tabular}{c|c}
             $h_S$ & Minimum Validation Loss 
             \\ \hline
$h_S = 0.2$ & 4.824 $\pm$0.304\\
$h_S = 0.3$  & \textbf{4.823} $\pm$0.304  \\
$h_S = 0.5$  & 4.825 $\pm$0.302  \\
$h_S = 0.8$  & 4.831 $\pm$0.298  \\
    \end{tabular} 
\end{table}

\end{document}